\newtheorem{theorem}{Theorem}
\newtheorem{lemma}{Lemma}
\DeclareMathOperator*{\argmin}{arg\,min}
\begin{document}

\title{Biclustering with Alternating K-Means}
\author{Nicolas Fraiman \\  \href{mailto:fraiman@email.unc.edu}{fraiman@email.unc.edu} 
\and Zichao Li \\ 
\href{mailto:lizichao@live.unc.edu}{lizichao@live.unc.edu}}
\date{}
\maketitle

\begin{abstract}
    Biclustering is the task of simultaneously clustering the rows and columns of the data matrix into different subgroups such that the rows and columns within a subgroup exhibit similar patterns. In this paper, we consider the case of producing block-diagonal biclusters. We provide a new formulation of the biclustering problem based on the idea of minimizing the empirical clustering risk. We develop and prove a consistency result with respect to the empirical clustering risk. Since the optimization problem is combinatorial in nature, finding the global minimum is computationally intractable. In light of this fact, we propose a simple and novel algorithm that finds a local minimum by alternating the use of an adapted version of the $k$-means clustering algorithm between columns and rows. We evaluate and compare the performance of our algorithm to other related biclustering methods on both simulated data and real-world gene expression data sets. The results demonstrate that our algorithm is able to detect meaningful structures in the data and outperform other competing biclustering methods in various settings and situations.
\end{abstract}

\section{Introduction}\label{sec:intro}
In many fields of application, the data can be represented by a matrix, and people are interested in the task of simultaneously clustering the rows and columns of the data matrix into different subgroups such that the rows and columns within a subgroup exhibit similar patterns. This general task has been studied in many different application domains. For example, in gene expression analysis, people seek to identify subgroups of genes that have similar expression levels within corresponding subgroups of conditions \citep{cheng2000biclustering}. In text mining, people attempt to recognize subgroups of documents that have similar properties with respect to corresponding subgroups of words \citep{dhillon2001co}. In collaborative filtering, people wish to detect subgroups of customers with similar preferences toward corresponding subgroups of products \citep{hofmann1999latent}. The most common name of the task is biclustering \citep{cheng2000biclustering, tanay2002discovering, kluger2003spectral, prelic2006systematic, mankad2014biclustering}, although it is also known by other names such as co-clustering \citep{dhillon2001co, dhillon2003information, cho2004minimum, banerjee2007generalized, chi2020provable}, block clustering \citep{govaert2003clustering, govaert2005algorithm, govaert2008block}, and direct clustering \citep{hartigan1972direct}.

Over the years, a large number of biclustering methods have been proposed. Comprehensive reviews of different biclustering methods can be found in \citet{madeira2004biclustering} and \citet{tanay2005biclustering}. The biclustering methods could be classified into different groups based on the structure of the produced biclusters. Figure \ref{bcstructure} shows three types of bicluster structures that could be obtained after appropriate row and column reordering:
\begin{enumerate}
    \item In Figure \ref{overlapping}, the biclusters are arbitrarily positioned and can overlap with each other. The majority of the biclustering methods produce this type of biclusters, including \citet{cheng2000biclustering}, CTWC \citep{getz2000coupled}, ISA \citep{bergmann2003iterative}, SAMBA \citep{tanay2004revealing}, plaid models \citep{lazzeroni2002plaid}, OPSM \citep{ben2002discovering}, xMOTIFs \citep{murali2002extracting}, and others \citep{li2009qubic, shabalin2009finding, hochreiter2010fabia}.
    \item In Figure \ref{checkerboard}, the biclusters are non-overlapping and follow a checkerboard structure. The biclustering methods that produce this type of biclusters include spectral biclustering \citep{kluger2003spectral}, SSVD \citep{lee2010biclustering}, sparse biclustering \citep{tan2014sparse}, convex biclustering \citep{chi2017convex}, profile likelihood biclustering \citep{flynn2020profile}, high-order spectral clustering \citep{han2020exact}, and others \citep{dhillon2003information, govaert2003clustering, cho2004minimum, chen2013biclustering}.
    \item In Figure \ref{exclusive}, the biclusters are rectangular diagonal blocks in the data matrix. In this case, there exist $k$ mutually exclusive and exhaustive clusters of rows, and $k$ corresponding mutually exclusive and exhaustive clusters of columns. Our method, named alternating $k$-means biclustering, produces this type of biclusters.
\end{enumerate}
Some biclustering methods \citep{segal2001rich, tang2001interrelated, wang2002clustering, gu2008bayesian, sill2011robust} produce other types of bicluster structures. A more detailed discussion is provided in \citet{madeira2004biclustering}.

\begin{figure}[t]
\centering
\begin{subfigure}{.3\textwidth}
\centering
\includegraphics[width=0.8\linewidth]{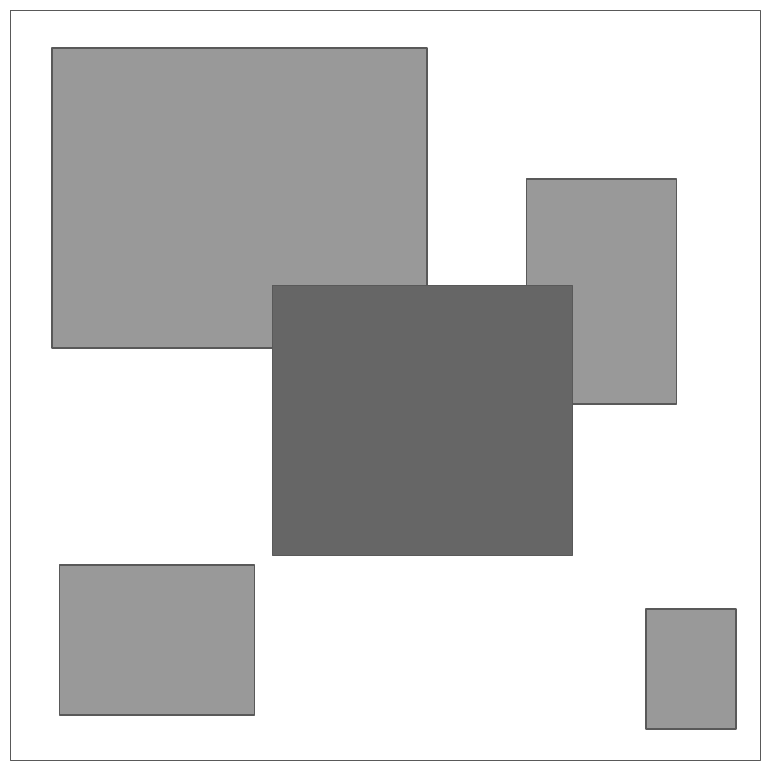}
\caption{}
\label{overlapping}
\end{subfigure}
\begin{subfigure}{.3\textwidth}
\centering
\includegraphics[width=0.8\linewidth]{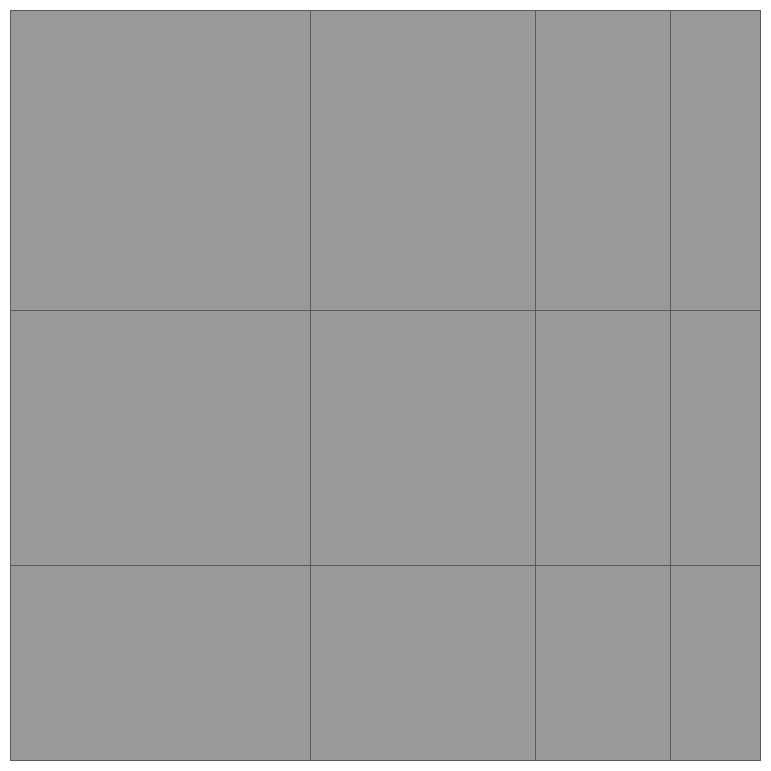}
\caption{}
\label{checkerboard}
\end{subfigure}
\begin{subfigure}{.3\textwidth}
\centering
\includegraphics[width=0.8\linewidth]{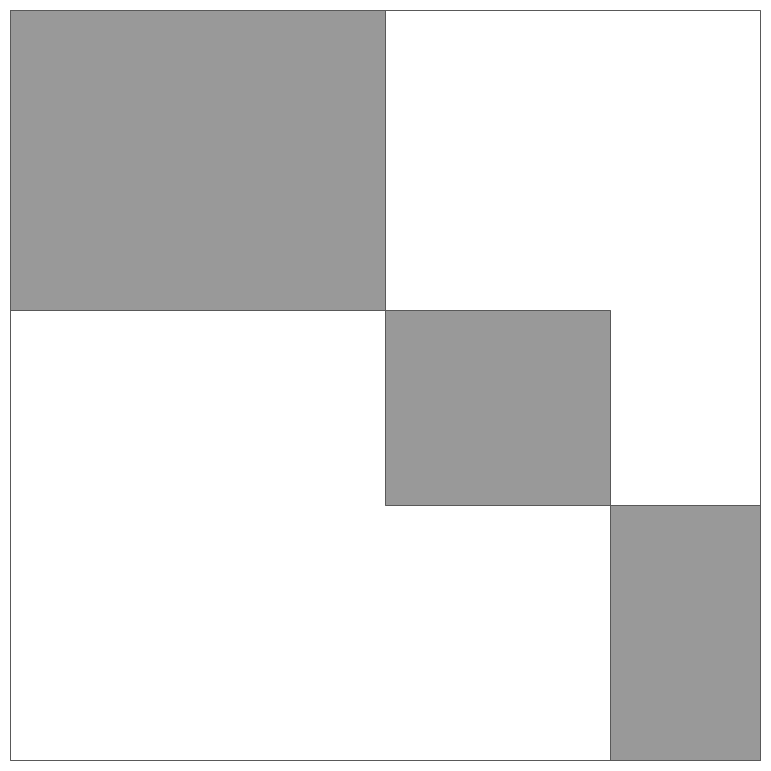}
\caption{}
\label{exclusive}
\end{subfigure}
\caption{Different types of bicluster structures (after row and column reordering). (a) Arbitrarily positioned overlapping biclusters, (b) non-overlapping biclusters with checkerboard structure, and (c) block-diagonal biclusters.}
\label{bcstructure}
\end{figure}

In general, methods that produce overlapping biclusters are more complex and flexible, whereas methods that produce non-overlapping biclusters are easier to interpret and visualize. Our method produces block-diagonal biclusters, which arise naturally in several applications. For example, in gene expression analysis, block-diagonal biclusters help people divide genes into different groups associated with different types of cancer tissues. In text mining, block-diagonal biclusters help people group documents into distinct clusters based on distinct sets of terms. One particular example is shown in Figure \ref{fig:example_heatmap}, where we apply our biclustering method to the breast and colon cancer gene expression data set \citep{de2008clustering} consisting of 104 samples and 182 genes. As we can see, our biclustering method partitions samples into two groups that almost perfectly correspond to the two cancer types, and it also identifies two groups of genes associated with breast and colon cancer.

\begin{figure}[ht]
\centering
\includegraphics[width=0.5\textwidth, height=0.5\textwidth]{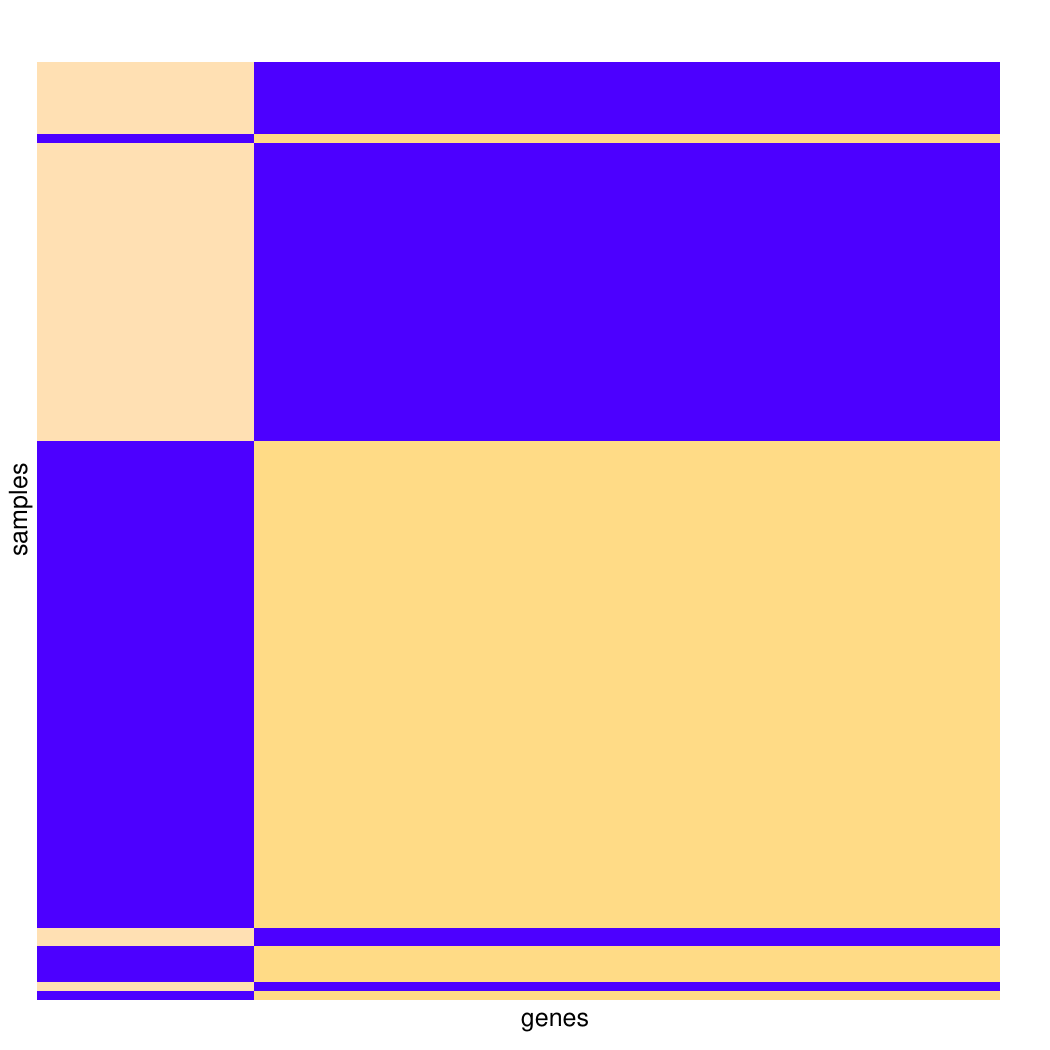}
\caption{Heatmap of the estimated mean matrix from our biclustering method on the breast and colon cancer data set. The rows (samples) are ordered by true cancer type: first 42 are colon cancer tissues, and the other 62 are breast cancer tissues. The columns (genes) are reordered based on the produced biclusters for visualization purposes.}
\label{fig:example_heatmap}
\end{figure}

The key difference between clustering and biclustering is that clustering is based on \emph{global} patterns whereas biclustering is based on \emph{local} patterns. More specifically, when performing clustering on the rows of the data matrix, all the columns are taken into consideration. In contrast, when performing biclustering on the data matrix, the rows are clustered into different groups based on different subsets of columns. This characteristic of biclustering inspired us to develop a ``local'' version of $k$-means clustering: instead of performing $k$-means clustering on the rows using all the columns, we could consider only using different subsets of columns. In other words, the cluster centers could be defined using different subsets of columns instead of all the columns. Starting from this simple idea, we adapt the formulation and algorithm of $k$-means clustering to a biclustering method by making several important modifications, and the details of our formulation and algorithm are given in Section \ref{sec:theory} and Section \ref{sec:algorithm}, respectively. Notably, our alternating $k$-means biclustering algorithm is conceptually as simple as $k$-means clustering, and it has the extra advantage of being able to discover local patterns as opposed to global patterns. These two characteristics make our algorithm an ideal candidate to serve as a baseline for biclustering problems even when our bicluster structure might not be flexible enough.

Our main contributions can be summarized as follows:
\begin{enumerate}
    \item We provide a new formulation of the biclustering problem based on the idea of minimizing the empirical clustering risk. The formulation is adapted from the $k$-means clustering problem, with two important changes with respect to the definitions of cluster centers and norm. We further develop and prove a consistency result with respect to the empirical clustering risk, which is generally quite rare for biclustering methods.
    \item Since minimizing the empirical clustering risk is a combinatorial optimization problem, finding the global minimum is computationally intractable. In light of this fact, we propose a simple and novel algorithm that finds a local minimum by alternately applying an adapted version of the $k$-means clustering algorithm between columns and rows. The simplicity of our algorithm makes it easy to understand, implement, and interpret. The R package \texttt{akmbiclust}, available on CRAN, implements our alternating $k$-means biclustering algorithm.
    \item We empirically evaluate and compare the performance of our method to other related biclustering methods on both simulated data and real-world gene expression data sets. The empirical results have demonstrated that our method is able to detect meaningful structures in the data and outperform other competing biclustering methods in various settings and situations.
\end{enumerate}

The rest of this paper is organized as follows. In Section \ref{sec:theory}, we formulate the task of biclustering as an optimization problem and present a consistency result. In Section \ref{sec:proof}, we provide a rigorous proof of the consistency result. In Section \ref{sec:probabilistic}, we describe a probabilistic interpretation of the optimization problem. In Section \ref{sec:algorithm}, we present a simple algorithm that finds the local optimum by alternating the use of $k$-means clustering between columns and rows. In Section \ref{sec:penalization}, we propose extending our method by adding penalization terms. In Section \ref{sec:simulation}, we evaluate and compare the algorithm's performance on simulated data to other related biclustering algorithms. In Section \ref{sec:applications}, we apply the algorithm to three cancer gene expression data sets, demonstrating its advantage over other related biclustering algorithms in terms of sample misclassification rate. In Section \ref{sec:discussion}, we conclude with a discussion.

\section{Problem Formulation and Consistency Result}\label{sec:theory}
Suppose we have a $n \times m$ matrix $\mathbf{X}$ representing $n$ data points $X_1, \dots, X_n \in \mathbb{R}^m$. A typical example is the gene expression matrix, with rows corresponding to genes and columns corresponding to conditions. We formulate the task of biclustering on $\mathbf{X}$ as partitioning the $n$ rows and $m$ columns into $k$ groups to get $k$ biclusters, as shown in Figure \ref{exclusive}. More specifically, let $J = \{1, \dots, n\}$ be the set of row indices, then $J$ could be partitioned into $k$ disjoint nonempty sets $J_1, \dots, J_k$, where $J_1 \cup \dots \cup J_k = J$. Similarly, let $I = \{1, \dots, m\}$ be the set of column indices, then $I$ could also be partitioned into $k$ disjoint nonempty sets $I_1, \dots, I_k$, where $I_1 \cup \dots \cup I_k = I$. The $k$ groups of row indices $J_1, \dots, J_k$ and column indices $I_1, \dots, I_k$ could be viewed as $k$ biclusters. Note that under this definition of biclustering every row and column in the matrix $\mathbf{X}$ belongs to one and only one bicluster. In other words, the rows and columns in the biclusters are exhaustive and exclusive.

For simplicity of notation, we assume a vector is a row vector unless otherwise stated. For any $X = (x_1, \dots, x_m) \in \mathbb{R}^m$, let $X(I_j) = (x_i)_{i \in I_j}$. For example, let $X = (1, 3, 4, 7)$, and $I_1 = \{1, 3\}, I_2 = \{2, 4\}$. Then $X(I_1) = (1, 4), X(I_2) = (3, 7)$. The space of $X(I_j)$ is defined as $\mathbb{R}^{I_j}$. We define a special norm on $\mathbb{R}^{I_j}$, called dimensionality-normalized norm. For any $X \in \mathbb{R}^{I_j}$, let $l_j = |I_j|$ denote the cardinality of the index set $I_j$, then it is also the dimension of the space $\mathbb{R}^{I_j}$, and the dimensionality-normalized norm of $X$ is defined as 
\begin{equation*}
    ||X||_{dn} = \sqrt{\frac{\sum_{i \in I_j} x_i^2}{l_j}}.
\end{equation*}
The name ``dimensionality-normalized norm'' comes from the following simple relationship between the dimensionality-normalized norm and the Euclidean norm:
\begin{equation*}
    ||X||_{dn}^2 = \frac{||X||_2^2}{l_j}.
\end{equation*}
Our method seeks to find the $k$ groups of column indices $I_j, 1 \le j \le k$ and the $k$ cluster centers $c_j \in \mathbb{R}^{I_j}, 1 \le j \le k$ such that the following objective function is minimized:
\begin{equation}\label{def:objective}
    \sum_{i=1}^{n} \min_{1 \le j \le k} ||X_i(I_j) - c_j||_{dn}^2.
\end{equation}
The corresponding $k$ groups of row indices $J_t, 1 \le t \le k$ can be obtained by selecting all the rows that are ``closest'' to cluster center $c_t$:
\begin{equation*}
    J_t = \{ i: \argmin_{1 \le j \le k} ||X_i(I_j) - c_j||_{dn}^2 = t \}, 1 \le t \le k.
\end{equation*}
Note that here ``closest'' is measured by the distance function induced by the dimensionality-normalized norm:
\begin{equation*}
    \text{dist}(X_i(I_j), c_j) = ||X_i(I_j) - c_j||_{dn}, 1 \le j \le k, 1 \le i \le n.
\end{equation*}

Our biclustering method can be viewed as a more complicated version of the traditional $k$-means clustering, which only seeks to find the $k$ cluster centers $c_j \in \mathbb{R}^m, 1 \le j \le k$ such that the following objective function is minimized:
\begin{equation*}
    \sum_{i=1}^{n} \min_{1 \le j \le k} ||X_i - c_j||_2^2.
\end{equation*}
However, it is important to note that there are two key differences:
\begin{enumerate}
    \item The $k$ cluster centers $c_1, \dots, c_k$ in our objective function are not vectors in $\mathbb{R}^m$. Instead, $c_j \in \mathbb{R}^{I_j}$ for $1 \le j \le k$, and the $k$ groups of column indices $I_1, \dots, I_k$ also are parameters that we need to optimize over. In fact, finding the best column partition $I_1, \dots, I_k$ is combinatorial in nature, which makes the optimization problem computationally intractable.
    \item The norm in our objective function is not the Euclidean norm. Instead, it is the dimensionality-normalized norm.
\end{enumerate}

The motivation of using the dimensionality-normalized norm instead of the Euclidean norm in the objective function (\ref{def:objective}) involves both theoretical and empirical aspects:
\begin{enumerate}
    \item Theoretically, we provide a probabilistic interpretation of the objective function, in which dimensionality-normalized norm is the result of normalizing the log-likelihood to ensure fair comparison between vectors of different dimensionality. More details are given in Section \ref{sec:probabilistic}.
    \item Empirically, we observe that using Euclidean norm would often result in biclusters that are of undesirable shape. In the extreme case, when one uses Euclidean norm to produce two biclusters on a $n \times n$ matrix, the result would be one $1 \times (n-1)$ bicluster and one $(n-1) \times 1$ bicluster. This is because when using the Euclidean norm version of the objective function, extremely tall and thin bicluster combined with short and wide bicluster means many terms with a few columns and a few terms with many columns, and the result is a very small value of the objective function. In contrast, using dimensionality-normalized norm would produce biclusters of normal shape, and empirically it also performs much better than using Euclidean norm.
\end{enumerate}

Suppose the data is a sequence of independent random observations $X_1, \dots, X_n \in \mathbb{R}^m$ with the same distribution as a generic random variable $X$ with distribution $\mu$. We minimize the empirical clustering risk
\begin{equation}\label{def:empirical_risk}
    W(\mathbf{I}, \mathbf{c}, \mu_n) = \frac{1}{n} \sum_{i=1}^{n} \min_{1 \le j \le k} ||X_i(I_j) - c_j||_{dn}^2
\end{equation}
over all possible choices of column partitions $\mathbf{I} = \{ I_j\}_{1 \le j \le k}$ and cluster centers $\mathbf{c} = \{ c_j\}_{1 \le j \le k}$. Here, $\mu_n$ is the empirical distribution of the data.

The performance of a clustering scheme given by the column partition $\mathbf{I}$ and cluster centers $\mathbf{c}$ is measured by the clustering risk
\begin{equation}\label{def:cluster_risk}
    W(\mathbf{I}, \mathbf{c}, \mu) = \int \min_{1 \le j \le k} ||x(I_j) - c_j||_{dn}^2 d\mu(x).
\end{equation}
The optimal clustering risk is defined as
\begin{equation}\label{def:optimal_risk}
    W^*(\mu) = \inf_\mathbf{I} \inf_\mathbf{c} W(\mathbf{I}, \mathbf{c}, \mu).
\end{equation}

Let $\delta_n \ge 0$. A column partition $\mathbf{I}_n$ and cluster centers $\mathbf{c}_n$ as a whole is a $\delta_n$-minimizer of the empirical clustering risk if 
\begin{equation*}
    W(\mathbf{I}_n, \mathbf{c}_n, \mu_n) \le W^*(\mu_n) + \delta_n,
\end{equation*}
where $W^*(\mu_n) = \inf_\mathbf{I} \inf_\mathbf{c} W(\mathbf{I}, \mathbf{c}, \mu_n)$. When $\delta_n = 0$, $\mathbf{I}_n$ and $\mathbf{c}_n$ as a whole is called an empirical risk minimizer. Since $\mu_n$ is supported on at most $n$ points, the existence of an empirical risk minimizer is guaranteed. 

The key theoretical result of this paper is the following consistency theorem, which states that the clustering risk of a $\delta_n$-minimizer of the empirical clustering risk converges to the optimal risk as long as $\lim_{n \to \infty} \delta_n = 0$.

\begin{theorem}\label{thm:consistency}
Assume that $\mathbb{E}||X||_2^2 < \infty$. Let $\mathbf{I}_n$ and $\mathbf{c}_n$ be a $\delta_n$-minimizer of the empirical clustering risk. If $\lim_{n \to \infty} \delta_n = 0$, then
\begin{enumerate}
    \item $\lim_{n \to \infty} W(\mathbf{I}_n, \mathbf{c}_n, \mu) = W^*(\mu)$ a.s., and
    \item $\lim_{n \to \infty} \mathbb{E}W(\mathbf{I}_n, \mathbf{c}_n, \mu) = W^*(\mu)$.
\end{enumerate}
\end{theorem}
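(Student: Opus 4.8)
I would follow the template of Pollard's strong-consistency proof for $k$-means, adapted to the block structure and the dimensionality-normalized norm. The first point is that the column partition $\mathbf{I}$ ranges over the \emph{finite} set $\mathcal{P}$ of partitions of $\{1,\dots,m\}$ into $k$ nonempty labeled blocks (at most $k^m$ of them), so it suffices to fix $\mathbf{I}=(I_1,\dots,I_k)$, run a $k$-means-type argument, and then take minima and finite unions over $\mathcal{P}$. For fixed $\mathbf{I}$ write $d_j(x,c_j)=\|x(I_j)-c_j\|_{dn}^2=\tfrac{1}{|I_j|}\sum_{i\in I_j}(x_i-c_{j,i})^2$, so that $W(\mathbf{I},\mathbf{c},\nu)=\int\min_{1\le j\le k}d_j(x,c_j)\,d\nu(x)$. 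The only structural facts used below are: (a) $\min_j d_j(x,c_j)\le 2\|x\|_2^2+2\max_j\|c_j\|_{dn}^2$, which is $\mu$-integrable since $\mathbb{E}\|X\|_2^2<\infty$ (in particular $W^*(\mu)<\infty$, taking all centers zero); and (b) for $\mathbf{c}$ in a bounded set, $\mathbf{c}\mapsto\min_j d_j(x,c_j)$ is Lipschitz with constant controlled by $\|x\|_2$ plus a constant, uniformly over $x$. The per-block weights $1/|I_j|$ and the fact that different centers act on different coordinates are immaterial to everything that follows.

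\textbf{Uniform law of large numbers on compacta.} For $M<\infty$ put $B_M=\{\mathbf{c}:\|c_j\|_{dn}\le M,\ 1\le j\le k\}$. Using (a) and the equicontinuity from (b), a standard covering/bracketing argument — or a direct invocation of a Glivenko--Cantelli theorem for the class $\{x\mapsto\min_j d_j(x,c_j):\mathbf{c}\in B_M\}$ — yields $\sup_{\mathbf{c}\in B_M}|W(\mathbf{I},\mathbf{c},\mu_n)-W(\mathbf{I},\mathbf{c},\mu)|\to 0$ a.s., and the maximum over the finitely many $\mathbf{I}\in\mathcal{P}$ is still $o(1)$ a.s.

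\textbf{The main obstacle: escaping centers.} The one genuine difficulty is the non-compactness of the center space. I would show, by induction on $k$ as in Pollard's argument, that there is an $M=M(\mu,k)<\infty$ with: (i) $W^*(\mu)=\inf\{W(\mathbf{I},\mathbf{c},\mu):\mathbf{I}\in\mathcal{P},\ \mathbf{c}\in B_M\}$; and (ii) almost surely, for all sufficiently large $n$, every $\delta_n$-minimizer $(\mathbf{I}_n,\mathbf{c}_n)$ of the empirical risk satisfies $\mathbf{c}_n\in B_M$. The mechanism: if $\|c_j\|_{dn}$ were very large, then for every $x$ in a ball $\{\|x\|_2\le R\}$ carrying all but $\varepsilon$ of the mass of $\mu$ (resp.\ of $\mu_n$) one has $d_j(x,c_j)\ge\tfrac{1}{|I_j|}\big(\|c_j\|_2-\|x\|_2\big)^2$, which exceeds $d_{j'}(x,c_{j'})$ for every $j'$ with $c_{j'}$ bounded; hence cluster $j$ serves only mass $\le\varepsilon$, and deleting it (reassigning those points to their next-closest center) changes the risk by an amount that is small once $R$ is large and $\varepsilon$ small, contradicting a strict gain of $k$ centers over $k-1$. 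If there is no such strict gain, one drops to $k-1$ centers and applies the inductive hypothesis (the base case $k=1$ is immediate: the optimal single center is the mean, of norm at most $\mathbb{E}\|X\|_2$). Making the ``all but $\varepsilon$ of the mass'' step hold \emph{uniformly in $n$}, a.s., is precisely where $\mu_n\Rightarrow\mu$ enters — concretely the SLLN applied to $\|X_i\|_2^2$ and to the indicators of a fixed ball — and this is the step I expect to demand the most care.

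\textbf{Assembly.} These ingredients first give $W^*(\mu_n)\to W^*(\mu)$ a.s.: ``$\limsup\le$'' by evaluating a near-optimal configuration in $B_M$ for $\mu$ against $\mu_n$ and using the pointwise SLLN, and ``$\liminf\ge$'' by taking an empirical minimizer (whose centers lie in $B_M$ eventually, by (ii)) and invoking the uniform SLLN. Then, for the given $\delta_n$-minimizer, once $\mathbf{c}_n\in B_M$,
\begin{equation*}
W(\mathbf{I}_n,\mathbf{c}_n,\mu)\ \le\ W(\mathbf{I}_n,\mathbf{c}_n,\mu_n)+\sup_{\mathbf{I}\in\mathcal{P},\,\mathbf{c}\in B_M}\big|W(\mathbf{I},\mathbf{c},\mu_n)-W(\mathbf{I},\mathbf{c},\mu)\big|\ \le\ W^*(\mu_n)+\delta_n+o(1),
\end{equation*}
so $\limsup_n W(\mathbf{I}_n,\mathbf{c}_n,\mu)\le W^*(\mu)$ a.s., while $W(\mathbf{I}_n,\mathbf{c}_n,\mu)\ge W^*(\mu)$ always; this is part 1. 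For part 2, by (ii) one has $W^*(\mu)\le W(\mathbf{I}_n,\mathbf{c}_n,\mu)\le 2\mathbb{E}\|X\|_2^2+2M^2$ for all large $n$ a.s., so the sequence is uniformly integrable; reverse Fatou combined with part 1 gives $\limsup_n\mathbb{E}W(\mathbf{I}_n,\mathbf{c}_n,\mu)\le W^*(\mu)$, and $\mathbb{E}W(\mathbf{I}_n,\mathbf{c}_n,\mu)\ge W^*(\mu)$ is immediate.
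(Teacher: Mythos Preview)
Your route is valid in spirit but genuinely different from the paper's. The paper does \emph{not} use a Pollard-style uniform law of large numbers plus a compactness/escaping-centers argument. Instead, following \citet{biau2008performance}, it works through the $L_2$ Wasserstein distance $\gamma$: a short triangle-inequality computation gives $\bigl|W(\mathbf{I},\mathbf{c},\mu_1)^{1/2}-W(\mathbf{I},\mathbf{c},\mu_2)^{1/2}\bigr|\le\gamma(\mu_1,\mu_2)$ \emph{uniformly over all} $(\mathbf{I},\mathbf{c})$, from which one line yields $W(\mathbf{I}_n,\mathbf{c}_n,\mu)^{1/2}-W^*(\mu)^{1/2}\le 2\gamma(\mu,\mu_n)+\sqrt{\delta_n}$; the theorem then reduces to $\gamma(\mu,\mu_n)\to 0$ a.s.\ and $\mathbb{E}\gamma^2(\mu,\mu_n)\to 0$. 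The payoff is that compactness of centers never enters --- the Wasserstein bound holds for unbounded $\mathbf{c}$ as well --- and part~2 drops out directly from the $L^2$-convergence of $\gamma(\mu,\mu_n)$ with no uniform-integrability argument. Your approach, by contrast, yields the extra information that near-optimizers have bounded centers, at the cost of a longer proof.

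Two places in your sketch deserve care if you pursue it. First, the induction step ``drop to $k-1$ centers'' is not literally available here: the column partition is required to have $k$ nonempty blocks, so deleting a center forces you to merge its block $I_j$ into another, which changes the $dn$-weights; it is cleaner to keep $\mathbf{I}$ fixed and simply replace an escaping $c_j$ by $0$, arguing that this cannot raise the risk once no point in a large ball selects $j$. Second, your part~2 argument has a genuine gap: ``$W(\mathbf{I}_n,\mathbf{c}_n,\mu)\le 2\mathbb{E}\|X\|_2^2+2M^2$ for all large $n$ a.s.'' does \emph{not} imply uniform integrability, because the threshold ``large $n$'' is random (take $Z_n=n^2\mathbf{1}_{U\le n^{-2}}$ for i.i.d.\ uniforms: $Z_n\to 0$ a.s.\ and is eventually $0$ a.s., yet $\mathbb{E}Z_n\equiv 1$). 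You would need either a deterministic-in-$n$ envelope or a separate argument controlling $\mathbb{E}W(\mathbf{I}_n,\mathbf{c}_n,\mu)$ for all $n$; the paper's Wasserstein inequality supplies exactly such an envelope, which is why that route is shorter here.
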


It is important to point out that we assume a minimizer of the empirical clustering risk can be found. However, finding the global minimum of the empirical clustering risk is a computationally intractable problem due to its combinatorial nature. In light of this fact, we present a simple algorithm in Section \ref{sec:algorithm} that finds a local minimum based on the idea of alternating the use of $k$-means clustering between columns and rows.

\section{Proof of the Consistency Result}\label{sec:proof}
Theorem \ref{thm:consistency} is quite similar to Proposition 2.1 in \citet{biau2008performance}, which showed that for $k$-means clustering, the clustering risk of a $\delta_n$-minimizer of the empirical clustering risk converges to the optimal risk. More specifically, they used 
\begin{align*}
    W(\mathbf{c}, \mu_n) &= \frac{1}{n} \sum_{i=1}^{n} \min_{1 \le j \le k} ||X_i - c_j||_2^2, \\
    W(\mathbf{c}, \mu) &= \int \min_{1 \le j \le k} ||x - c_j||_2^2 d\mu(x), \\
    W^*(\mu) &= \inf_\mathbf{c} W(\mathbf{c}, \mu).
\end{align*}
in place of our $W(\mathbf{I}, \mathbf{c}, \mu_n)$, $W(\mathbf{I}, \mathbf{c}, \mu)$, and $W^*(\mu)$ defined in Equation (\ref{def:empirical_risk}), (\ref{def:cluster_risk}), and (\ref{def:optimal_risk}). Not surprisingly, the steps used to proved Proposition 2.1 in \citet{biau2008performance} could also be adapted to prove Theorem \ref{thm:consistency}, after making some minor changes and proving similar lemmas.

First, we introduce the definition of $L_2$ Wasserstein distance, which is essential for the proof. The $L_2$ Wasserstein distance between two probability measures $\mu_1$ and $\mu_2$ on $\mathbb{R}^m$, with finite second moment, is defined as
\begin{equation*}
    \gamma(\mu_1, \mu_2) = \inf_{X \sim \mu_1, Y \sim \mu_2} (\mathbb{E}||X-Y||_2^2)^{1/2},
\end{equation*}
where the infimum is taken over all joint distributions of two random variables $X$ and $Y$ such that $X$ has distribution $\mu_1$ and $Y$ has distribution $\mu_2$. Our main work is to prove the following lemma, which is similar to inequality (4) in \citet{biau2008performance}.

\begin{lemma}\label{lem:costbound}
For any column partition $\mathbf{I}$ and cluster centers $\mathbf{c}$, 
\begin{equation*}
    \left| W(\mathbf{I}, \mathbf{c}, \mu_1)^{1/2} - W(\mathbf{I}, \mathbf{c}, \mu_2)^{1/2} \right| \le \gamma(\mu_1, \mu_2).
\end{equation*}
\end{lemma}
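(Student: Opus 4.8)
The plan is to show that, for fixed partition $\mathbf{I}$ and centers $\mathbf{c}$, the function $f(x) = \min_{1 \le j \le k} \|x(I_j) - c_j\|_{dn}$ is $1$-Lipschitz on $\mathbb{R}^m$ with respect to the Euclidean norm, and then to convert this pointwise bound into the stated $L^2$-type inequality by evaluating $f$ along an optimal coupling of $\mu_1$ and $\mu_2$. Since $W(\mathbf{I}, \mathbf{c}, \mu)^{1/2} = \left(\int f^2 \, d\mu\right)^{1/2} = \|f\|_{L^2(\mu)}$, the lemma is exactly the statement that $\|f\|_{L^2(\mu_1)}$ and $\|f\|_{L^2(\mu_2)}$ differ by at most $\gamma(\mu_1,\mu_2)$.

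First I would establish the Lipschitz bound. For each fixed $j$, the reverse triangle inequality for $\|\cdot\|_{dn}$ gives $\bigl|\,\|x(I_j) - c_j\|_{dn} - \|y(I_j) - c_j\|_{dn}\,\bigr| \le \|x(I_j) - y(I_j)\|_{dn}$, and because $l_j = |I_j| \ge 1$ we have $\|x(I_j) - y(I_j)\|_{dn} = \|x(I_j) - y(I_j)\|_2 / \sqrt{l_j} \le \|x(I_j) - y(I_j)\|_2 \le \|x - y\|_2$, the last step since $I_j$ is a subset of the full index set. Combining this with the elementary fact that $|\min_j a_j - \min_j b_j| \le \max_j |a_j - b_j|$ (valid because both minima run over the same index set $\{1,\dots,k\}$) yields $|f(x) - f(y)| \le \|x - y\|_2$ for all $x, y \in \mathbb{R}^m$.

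Next I would use the fact, recalled just before the statement, that the infimum defining $\gamma(\mu_1, \mu_2)$ is attained: choose a joint law of $(X,Y)$ on a common probability space with $X \sim \mu_1$, $Y \sim \mu_2$, and $\mathbb{E}\|X - Y\|_2^2 = \gamma(\mu_1,\mu_2)^2$. Then $W(\mathbf{I}, \mathbf{c}, \mu_1)^{1/2} = \|f(X)\|_{L^2}$ and $W(\mathbf{I}, \mathbf{c}, \mu_2)^{1/2} = \|f(Y)\|_{L^2}$ (both finite, since $f(x) \le f(0) + \|x\|_2$ and the second moments are finite, though finiteness is not actually needed for the inequality to hold in $[0,\infty]$). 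The triangle inequality in $L^2$ of the coupling space together with the Lipschitz bound gives
\[
\bigl|\,\|f(X)\|_{L^2} - \|f(Y)\|_{L^2}\,\bigr| \le \|f(X) - f(Y)\|_{L^2} = \bigl(\mathbb{E}\,|f(X) - f(Y)|^2\bigr)^{1/2} \le \bigl(\mathbb{E}\,\|X - Y\|_2^2\bigr)^{1/2} = \gamma(\mu_1, \mu_2),
\]
which is precisely the claimed inequality.

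I do not expect a genuine obstacle. The only place that needs a moment of care is the dimensionality-normalized norm: one must use $l_j \ge 1$ so that the Lipschitz constant remains exactly $1$, and one must note that restricting a vector to a coordinate subset cannot increase its Euclidean norm. Beyond that, the argument is the standard observation that postcomposition with a $1$-Lipschitz map is a contraction for the comparison between $L^2$ norms and the $L_2$ Wasserstein distance, exactly as in \citet{biau2008performance}.
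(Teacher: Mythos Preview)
Your proof is correct and follows essentially the same route as the paper: both arguments take an optimal coupling $(X,Y)$, use the reverse triangle inequality for $\|\cdot\|_{dn}$, the bound $\|\cdot\|_{dn}\le\|\cdot\|_2$ (from $l_j\ge 1$) together with restriction to coordinates, and the Minkowski inequality in $L^2$. Your version is slightly cleaner in that it isolates the $1$-Lipschitz property of $f(x)=\min_j\|x(I_j)-c_j\|_{dn}$ as a standalone step and then invokes the general contraction principle $\bigl|\|f(X)\|_{L^2}-\|f(Y)\|_{L^2}\bigr|\le\|X-Y\|_{L^2}$, whereas the paper carries the chain of inequalities through inline; but the underlying ideas and ingredients are identical.
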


\begin{proof}
The proof is based on the proof of Lemma 3 in \citet{linder2002learning}. Let $X \sim \mu_1$ and $Y \sim \mu_2$ achieve the infimum defining $\gamma(\mu_1, \mu_2)$. Then
\begin{align*}
    W(\mathbf{I}, \mathbf{c}, \mu_1)^{1/2} &= \left[ \int \min_{1 \le j \le k} ||x(I_j) - c_j||_{dn}^2 d\mu_1(x) \right]^{1/2} \\
    &= \left[ \mathbb{E} \min_{1 \le j \le k} ||X(I_j) - c_j||_{dn}^2 \right]^{1/2} \\
    &= \left[ \mathbb{E} \min_{1 \le j \le k} \frac{||X(I_j) - c_j||_2^2}{l_j} \right]^{1/2} \\
    &\le \left[ \mathbb{E} \min_{1 \le j \le k} \frac{(||X(I_j) - Y(I_j)||_2 + ||Y(I_j) - c_j||_2)^2}{l_j} \right]^{1/2}.
\end{align*}
Using Cauchy–Schwarz inequality, we have
\begin{align*}
&\mathbb{E} \left[ \frac{(||X(I_j) - Y(I_j)||_2 + ||Y(I_j) - c_j||_2)^2}{l_j} \right] \\ 
&= \mathbb{E} \left[ \frac{||X(I_j) - Y(I_j)||_2^2}{l_j} \right] + \mathbb{E} \left[ \frac{||Y(I_j) - c_j||_2^2}{l_j} \right] + 2 \mathbb{E} \left[ \frac{||X(I_j) - Y(I_j)||_2 ||Y(I_j) - c_j||_2}{l_j} \right] \\
&\le \mathbb{E} ||X-Y||_2^2 + \mathbb{E} \left[ \frac{||Y(I_j) - c_j||_2^2}{l_j} \right] + 2 \mathbb{E} \left[ ||X - Y||_2 \cdot \frac{||Y(I_j) - c_j||_2}{\sqrt{l_j}} \right] \\
&\le \mathbb{E} ||X-Y||_2^2 + \mathbb{E} \left[ \frac{||Y(I_j) - c_j||_2^2}{l_j} \right] + 2 \left[ \mathbb{E} ||X - Y||_2^2 \right]^{1/2} \left[ \mathbb{E} \frac{||Y(I_j) - c_j||_2^2}{l_j} \right]^{1/2} \\
&= \left( \left[ \mathbb{E} ||X - Y||_2^2 \right]^{1/2} + \left[ \mathbb{E} \frac{||Y(I_j) - c_j||_2^2}{l_j} \right]^{1/2} \right)^2.
\end{align*}
Consequently
\begin{align*}
    W(\mathbf{I}, \mathbf{c}, \mu_1)^{1/2} &\le \left[ \mathbb{E} \min_{1 \le j \le k} \frac{(||X(I_j) - Y(I_j)||_2 + ||Y(I_j) - c_j||_2)^2}{l_j} \right]^{1/2} \\
    &\le \left[ \mathbb{E} ||X - Y||_2^2 \right]^{1/2} + \left[ \mathbb{E} \min_{1 \le j \le k} \frac{||Y(I_j) - c_j||_2^2}{l_j} \right]^{1/2} \\
    &= \left[ \mathbb{E} ||X - Y||_2^2 \right]^{1/2} + \left[ \mathbb{E} \min_{1 \le j \le k} ||Y(I_j) - c_j||_{dn}^2 \right]^{1/2} \\
    &= \gamma(\mu_1, \mu_2) + W(\mathbf{I}, \mathbf{c}, \mu_2)^{1/2},
\end{align*}
which implies that $W(\mathbf{I}, \mathbf{c}, \mu_1)^{1/2} - W(\mathbf{I}, \mathbf{c}, \mu_2)^{1/2} \le \gamma(\mu_1, \mu_2)$. The other direction can be proved similarly. 
\end{proof}

Having proved Lemma \ref{lem:costbound}, we are ready to prove the following lemma, which is similar to Lemma 4.1 in \citet{biau2008performance}.

\begin{lemma}\label{lem:minimizer}
Let $\mathbf{I}_n$ and $\mathbf{c}_n$ be a $\delta_n$-minimizer of the empirical clustering risk. Then
\begin{equation*}
    W(\mathbf{I}_n, \mathbf{c}_n, \mu)^{1/2} - [\inf_\mathbf{I} \inf_\mathbf{c} W(\mathbf{I}, \mathbf{c}, \mu)]^{1/2} \le 2\gamma(\mu, \mu_n) + \sqrt{\delta_n}.
\end{equation*}
\end{lemma}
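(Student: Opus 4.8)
The strategy is the standard triangle-inequality argument for consistency of empirical risk minimizers, using Lemma \ref{lem:costbound} as the main engine. I would pass everything through the square-root of the clustering risk, since that is where Lemma \ref{lem:costbound} gives a clean Lipschitz-type bound in the Wasserstein distance. The chain of inequalities has three conceptual moves: (i) compare $W(\mathbf{I}_n,\mathbf{c}_n,\mu)^{1/2}$ to $W(\mathbf{I}_n,\mathbf{c}_n,\mu_n)^{1/2}$, (ii) use the $\delta_n$-minimizer property on $\mu_n$, and (iii) compare the optimal empirical risk $W^*(\mu_n)^{1/2}$ back to the optimal population risk $W^*(\mu)^{1/2}$.

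\medskip\noindent\emph{Step 1 (from population to empirical).} Apply Lemma \ref{lem:costbound} with $\mu_1 = \mu$ and $\mu_2 = \mu_n$, holding the fixed partition $\mathbf{I}_n$ and centers $\mathbf{c}_n$. This gives $W(\mathbf{I}_n,\mathbf{c}_n,\mu)^{1/2} \le W(\mathbf{I}_n,\mathbf{c}_n,\mu_n)^{1/2} + \gamma(\mu,\mu_n)$.

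\medskip\noindent\emph{Step 2 ($\delta_n$-minimizer).} By definition of a $\delta_n$-minimizer, $W(\mathbf{I}_n,\mathbf{c}_n,\mu_n) \le W^*(\mu_n) + \delta_n$, and using $\sqrt{a+b}\le\sqrt{a}+\sqrt{b}$ for $a,b\ge 0$, we get $W(\mathbf{I}_n,\mathbf{c}_n,\mu_n)^{1/2} \le W^*(\mu_n)^{1/2} + \sqrt{\delta_n}$.

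\medskip\noindent\emph{Step 3 (from empirical optimum back to population optimum).} This is the step requiring a little care, since $W^*(\mu_n)$ is an infimum over $(\mathbf{I},\mathbf{c})$ that need not be attained by the population-optimal pair. Fix $\varepsilon>0$ and choose $(\mathbf{I}',\mathbf{c}')$ with $W(\mathbf{I}',\mathbf{c}',\mu)^{1/2} \le W^*(\mu)^{1/2} + \varepsilon$. Apply Lemma \ref{lem:costbound} with $\mu_1 = \mu_n$, $\mu_2 = \mu$ and this fixed pair to get $W^*(\mu_n)^{1/2} \le W(\mathbf{I}',\mathbf{c}',\mu_n)^{1/2} \le W(\mathbf{I}',\mathbf{c}',\mu)^{1/2} + \gamma(\mu,\mu_n) \le W^*(\mu)^{1/2} + \varepsilon + \gamma(\mu,\mu_n)$. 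Letting $\varepsilon\downarrow 0$ yields $W^*(\mu_n)^{1/2} \le W^*(\mu)^{1/2} + \gamma(\mu,\mu_n)$.

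\medskip\noindent\emph{Assembly.} Chaining Steps 1--3,
\[
W(\mathbf{I}_n,\mathbf{c}_n,\mu)^{1/2}
\le W^*(\mu)^{1/2} + 2\gamma(\mu,\mu_n) + \sqrt{\delta_n},
\]
which, recalling $W^*(\mu) = \inf_\mathbf{I}\inf_\mathbf{c} W(\mathbf{I},\mathbf{c},\mu)$, is exactly the claimed inequality. The only subtlety worth flagging is Step 3: one must not assume the infimum defining $W^*(\mu)$ is attained, so the $\varepsilon$-approximation argument (or an equivalent limiting argument over a minimizing sequence of pairs) is essential; everything else is a direct application of Lemma \ref{lem:costbound} together with subadditivity of the square root. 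No integrability hypothesis beyond finiteness of the relevant second moments is needed here — that assumption from Theorem \ref{thm:consistency} will be used later to control $\gamma(\mu,\mu_n)\to 0$, not in this lemma.
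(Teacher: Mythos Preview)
Your proof is correct and follows essentially the same approach as the paper's: two applications of Lemma~\ref{lem:costbound} (once at the $\delta_n$-minimizer, once at a near-optimal pair for $\mu$), combined with the $\delta_n$-minimizer property and subadditivity of the square root, with an $\varepsilon$-approximation to handle the population infimum. The only cosmetic difference is that the paper approximates $W^*(\mu)$ at the level of the risk and then uses $[W(\mathbf{I}^*,\mathbf{c}^*,\mu)-\varepsilon]_+^{1/2} \ge W(\mathbf{I}^*,\mathbf{c}^*,\mu)^{1/2}-\sqrt{\varepsilon}$, whereas you approximate directly on the square-root scale; both are equivalent and your version is arguably slightly cleaner.
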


\begin{proof}
After replacing $W(\mathbf{c}_n, \mu)$, $\inf_\mathbf{c} W(\mathbf{c}, \mu)$ and some other intermediate variables with $W(\mathbf{I}_n, \mathbf{c}_n, \mu)$, $\inf_\mathbf{I} \inf_\mathbf{c} W(\mathbf{I}, \mathbf{c}, \mu)$ and other corresponding intermediate variables, the same steps that are used to prove Lemma 4.1 in \citet{biau2008performance} could be applied to prove Lemma \ref{lem:minimizer}. Lemma \ref{lem:costbound} is used in the last inequality of the proof.
\end{proof}

Theorem \ref{thm:consistency} follows naturally from Lemma \ref{lem:minimizer} and Lemma 4.2 in \citet{biau2008performance}, which states that $\displaystyle\lim_{n \to \infty} \gamma(\mu, \mu_n) = 0$ a.s., and $\displaystyle\lim_{n \to \infty} \mathbb{E} \gamma^2(\mu, \mu_n) = 0$.

\section{A Probabilistic Interpretation of the Optimization Problem}\label{sec:probabilistic}
In this section, we provide a probabilistic interpretation of the optimization problem, which also serves as the motivation behind the definition of the dimensionality-normalized norm. Suppose every row $X_i$ in the matrix $\mathbf{X}$ is generated independently through the following process:
\begin{enumerate}
    \item Select a nonempty subset of all the columns $I_j \subset I$, and the entries in those columns follow a multivariate normal distribution with mean vector $c_j$ and covariance matrix $\sigma^2 \mathbf{I}$ (here $\mathbf{I}$ denotes the identity matrix):
    \begin{equation*}
        X_i(I_j) \sim \mathcal{N}(c_j, \sigma^2 \mathbf{I}).
    \end{equation*}
    \item The entries in the other columns are considered as noise and do not affect the likelihood of $X_i$.
\end{enumerate}
The log-likelihood of $X_i$ has the following property:
\begin{equation*}
    \log \mathcal{L}(I_j, c_j|X_i) \propto - \frac{1}{2\sigma^2}||X_i(I_j)-c_j||_2^2 - \frac{l_j}{2} \log(2\pi \sigma^2),
\end{equation*}
where $l_j = |I_j|$ is the cardinality of the index set $I_j$ and also the dimensionality of the vector $X_i(I_j)$.

Naturally, the next step is to maximize the log-likelihood of $X_i$ over $I_j$ and $c_j$. However, there is one issue: different $I_j$ might have different cardinality $l_j$. This means that $X_i(I_j)$ might have different dimensionality, and directly comparing the log-likelihood of vectors of different dimensionality is problematic: when $\log(2\pi \sigma^2) > 0$, increasing the dimensionality $l_j$ would monotonically decrease the log-likelihood.

One simple solution is to maximize the dimensionality-normalized log-likelihood of $X_i$:
\begin{equation*}
    \frac{\log \mathcal{L}(I_j, c_j|X_i)}{l_j} \propto - \frac{||X_i(I_j)-c_j||_2^2}{l_j},
\end{equation*}
which is equivalent to minimizing 
\begin{equation*}
    \frac{||X_i(I_j)-c_j||_2^2}{l_j} = ||X_i(I_j)-c_j||_{dn}^2.
\end{equation*}
Therefore, maximizing the joint dimensionality-normalized log-likelihood of all the rows in the matrix $\mathbf{X}$ is equivalent to minimizing the empirical clustering risk
\begin{equation*}
    \frac{1}{n} \sum_{i=1}^{n} \min_{1 \le j \le k} ||X_i(I_j) - c_j||_{dn}^2.
\end{equation*}
This equivalence establishes a connection between the optimization perspective and the probabilistic perspective of the biclustering problem.

\section{Algorithm}\label{sec:algorithm}
In this section, we present a simple and novel algorithm that finds a local minimum of the empirical clustering risk. An implementation of our algorithm is provided in the R package \texttt{akmbiclust}, available on CRAN.

The idea is to alternate the use of an adapted version of the $k$-means clustering algorithm between columns and rows. Similar to the widely used Lloyd's algorithm in $k$-means clustering, our algorithm is also based on heuristics and does not guarantee to achieve global optimum. In each individual run, our alternating $k$-means biclustering algorithm works as described in Algorithm \ref{algorithm} below.  

\begin{algorithm}
\begin{enumerate}
    \item Start by performing $k$-means clustering separately on the rows and columns of the input matrix $\mathbf{X}$ to obtain the initial partitions of the $n$ rows $J_1, \dots, J_k$ and $m$ columns $I_1, \dots, I_k$. Calculate and record the loss.
    \item With a fixed $I_1, \dots, I_k$, the optimal cluster centers $c_1, \dots, c_k$ could be found in the following way: 
    \begin{enumerate}
        \item (Update step) Given row partitions $J_1, \dots, J_k$, update the cluster centers $c_1, \dots, c_k$ by the following equation:
        \begin{equation*}
            c_j = \frac{1}{|J_j|} \sum_{i \in J_j} X_i(I_j), 1 \le j \le k.
        \end{equation*}
        \item (Assignment step) Given cluster centers $c_1, \dots, c_k$, update the row partitions $J_1, \dots, J_k$ by assigning every row to the cluster center with the smallest distance (induced by the dimensionality-normalized norm), and all the rows that are closest to $c_j$ form $J_j, 1 \le j \le k$.
    \end{enumerate}
    Alternate between (a) and (b) until convergence, and obtain a partition of the $n$ rows $J_1, \dots, J_k$.
    \item Transpose the matrix $\mathbf{X}$, and $J_1, \dots, J_k$ becomes a partition of the $n$ columns. Again, alternate between (a) and (b) until convergence, and obtain a partition the $m$ rows $I_1, \dots, I_k$. Transpose the matrix $\mathbf{X}$ back, and $I_1, \dots, I_k$ becomes a partition of the $m$ columns.
    \item Alternate between step 2 and step 3 until convergence. Calculate and record the loss.
    \item Compare the losses at the end of step 1 and step 4. Output the $k$ groups of row indices $J_1, \dots, J_k$ and column indices $I_1, \dots, I_k$ associated with the minimum loss.
\end{enumerate}
\caption{Alternating $k$-means biclustering}
\label{algorithm}
\end{algorithm}

Noticeably, the subroutine that alternates between (a) and (b) is quite similar to the widely used Lloyd's algorithm in $k$-means clustering, which is the reason why our algorithm is called ``alternating $k$-means biclustering''. However, we note that there are two important differences:
\begin{enumerate}
    \item The cluster centers $c_1, \dots, c_k$ are not vectors in $\mathbb{R}^m$. Instead, $c_j \in \mathbb{R}^{I_j}$ for $1 \le j \le k$.
    \item When calculating the distance between a row $X_i$ and a cluster center $c_j$, the distance function is not induced by the Euclidean norm. Instead, it is induced by the dimensionality-normalized norm.
\end{enumerate}

It is recommended to run our algorithm multiple times and choose the result with the minimum loss, each time starting with a different initialization by randomly permuting the rows and columns of the input matrix $\mathbf{X}$. The reason is twofold:
\begin{enumerate}
    \item First, our algorithm does not guarantee global optimum and depends on the partitions obtained from the initial separate $k$-means clustering, so running our algorithm multiple times increases the probability of finding a smaller local minimum.
    \item Second, just like $k$-means clustering algorithm, our algorithm also might encounter empty cluster problem. More specifically, if in any assignment step any group of row indices $J_j, 1 \le j \le k$ becomes empty, then the algorithm cannot proceed and need to restart. The probability of having empty clusters is small when $k$ is much smaller than $\min(n, m)$, but might become larger when $k$ approaches $\min(n, m)$.
\end{enumerate}

\section{Penalization}\label{sec:penalization}
In this section, we consider extending our method by adding penalization terms to the loss function. So far, the loss function that we minimize is the empirical clustering risk: 
\begin{equation*}
    \frac{1}{n} \sum_{i=1}^{n} \min_{1 \le j \le k} ||X_i(I_j) - c_j||_{dn}^2.
\end{equation*}
Intuitively, minimizing this loss encourages all the rows in the bicluster $j$ to have similar entries in the columns $I_j$ across different rows. However, it does not differentiate between how large or small those entries are. Therefore, it might be beneficial to place some form of penalization on those entries to encourage ``good'' biclustering results. To this end, we consider the following penalization method: 

Let $||\mathbf{X}||_{F} = \sqrt{\sum_{i=1}^{n} \sum_{j=1}^{m} X_{ij}^2}$ denote the Frobenius norm of a matrix $\mathbf{X} \in \mathbb{R}^{n \times m}$. Let the index of the $k$ biclusters be from 0 to $k-1$. For every bicluster $j$ where $0 \le j \le k-1$, let $\mathbf{X}^{(j)}$ denote the submatrix of $\mathbf{X}$ consisting only of rows and columns that belong to bicluster $j$. The following penalization term is added to every bicluster $j$ where $1 \le j \le k-1$: 
\begin{equation*}
    \lambda \cdot \frac{||\mathbf{X}||_{F}^2}{||\mathbf{X}^{(j)}||_{F}^2 + 1}.
\end{equation*}
Note that bicluster $0$ is the special bicluster that does not have the above penalization term. The parameter $\lambda$ is a tuning parameter.

The motivation of the penalization method comes from the following two observations:
\begin{enumerate}
    \item In general, the entries in the biclusters should represent signals, which means that they should not be close to zero. Therefore, the Frobenius norm of the submatrix $||\mathbf{X}^{(j)}||_{F}$ induced by the bicluster $j$ should be large.
    \item However, not all rows and columns should be classified into one of $k$ biclusters representing signals. Some rows or columns might just consist of random noise, and it is reasonable to include a special bicluster that represents random noise. Adding the penalization term $\lambda \cdot \frac{||\mathbf{X}||_{F}^2}{||\mathbf{X}^{(j)}||_{F}^2 + 1}$ to every bicluster creates a potential problem: when there is supposed to be a bicluster with entries close to zero, the penalization term might become excessively large. Thus, we choose to not apply the penalization term to bicluster $0$, which is the special bicluster representing noise.
\end{enumerate}
In the end, we decide to use the above penalization function, because it works reasonably well on both simulated and real-world data.

The penalized loss function can be written as:
\begin{equation*}
    \frac{1}{n} \sum_{i=1}^{n} \min_{0 \le j \le k-1} ||X_i(I_j) - c_j||_{dn}^2 + \lambda \sum_{j=1}^{k-1} \frac{||\mathbf{X}||_{F}^2}{||\mathbf{X}^{(j)}||_{F}^2 + 1}.
\end{equation*}
When $\lambda = 0$, the penalized loss function reduces to the empirical clustering risk. 

It is important to point out that the loss function does not affect the alternating process between step 2 and step 3 in Algorithm \ref{algorithm}. However, the loss function does affect which specific biclustering result is chosen among different biclustering results produced by the algorithm: typically the algorithm is run with many random initializations, and even in each individual run, the algorithm needs to compare the two losses of two biclustering results: one at the end of step 1 corresponding to the partitions obtained by separate $k$-means, one at the end of step 4 when the algorithm finishes performing alternating $k$-means. Among all the different biclustering results, the biclustering result with the minimum loss is chosen as the final output.

\section{Simulation Studies}\label{sec:simulation}
In this section, we evaluate and compare the performance of the following five biclustering methods on simulated data with different settings:
\begin{enumerate}
    \item Alternating $k$-means biclustering (AKM): This is the method presented in this paper. We use the penalized loss function in Section \ref{sec:penalization}, with three different $\lambda$ values: 0, 0.1, and 1.
    \item Separate $k$-means clustering (KM): This method simply performs $k$-means clustering separately on the rows and columns.
    \item Profile likelihood biclustering (PL) \citep{flynn2020profile}: This method is based on profile likelihood and has associated consistency guarantees. We implement the method using the R package \texttt{biclustpl}. The distribution family is selected as Gaussian, which is the true distribution of the simulated data.
    \item Sparse biclustering (SBC) \citep{tan2014sparse}: This method assumes the entries are normally distributed with a bicluster-specific mean and a common variance, and maximizes the $L_1$-penalized log-likelihood to obtain sparse biclusters. We implement the method using the R package \texttt{sparseBC}. The input matrix is always mean-centered before applying the method, and the tuning parameter $\lambda$ is selected by choosing the $\lambda$ with the smallest BIC over a grid of $\lambda$ values, both of which are suggested in their paper.
    \item High-order spectral clustering (HSC) \citep{han2020exact}: This method assumes a tensor block model and provides statistical optimality guarantees under a mild sub-Gaussian noise assumption. We implement the method using the R package \texttt{HLloyd}.
\end{enumerate}
All methods except HSC are run with 100 random initializations. 

Among numerous existing biclustering methods, the above five methods are selected to evaluate and compare their performance in the simulation studies because they satisfy the following two requirements:
\begin{enumerate}
    \item Every row should be classified into one and only one row cluster. In addition, every column should also be classified into one and only one column cluster. This means that the biclustering methods should produce non-overlapping biclusters with checkerboard structure (Figure \ref{checkerboard}) or block-diagonal biclusters (Figure \ref{exclusive}).
    \item In addition, the biclustering methods should also allow explicitly specifying the number of clusters that the rows and columns are classified into. A few biclustering methods such as spectral biclustering \citep{kluger2003spectral}, SSVD \citep{lee2010biclustering}, and convex biclustering \citep{chi2017convex} satisfy the first requirement but do not satisfy this requirement.  
\end{enumerate}

In our simulations, the evaluation metric is the misclassification rate, which is defined as:
\begin{equation*}
    \text{misclassification rate} = \frac{\text{number of entries classified into the wrong row or column cluster}}{\text{total number of entries in the input matrix }\mathbf{X}}.
\end{equation*}
Smaller misclassification rate indicates better performance, and a perfect biclustering result would have a misclassification rate of 0.

We generate simulated data in three different settings. In all three settings, the input matrix $\mathbf{X}$ is generated using a $2 \times 2$ block model, and for all the methods we set the number of clusters that the rows and columns are classified into to be 2. The number of rows is set to be $n = 400$, and the number of columns is set to be $m = a \cdot n$ where $a \in \{0.5, 1.0, 2.0\}$. The entries $X_{ij}$ in the input matrix $\mathbf{X}$ are generated independently through the following process:
\begin{enumerate}
    \item Sample the true row class $u_i \in \{1, 2\}$ from the multinomial distribution with probability $p = (0.3, 0.7)$.
    \item Sample the true column class $v_j \in \{1, 2\}$ from the multinomial distribution with probability $q = (0.2, 0.8)$.
    \item Conditioning on $u_i$ and $v_j$, $X_{ij}$ follows a Gaussian distribution with mean $\mathbf{M}_{u_iv_j}$ and standard deviation $\mathbf{S}_{u_iv_j}$:
    \begin{equation*}
        X_{ij}|u_i,v_j \sim \mathcal{N}(\mathbf{M}_{u_iv_j}, \mathbf{S}_{u_iv_j}^2).
    \end{equation*}
\end{enumerate}
Note that $\mathbf{M}$ and $\mathbf{S}$ are $2 \times 2$ matrices representing the means and standard deviations of entries in different blocks. They are different for each simulation setting.

\subsection{Simulation 1: Blocks with Different Means and the Same Variance}
In the first simulation, we consider the case where the $2 \times 2$ blocks have different means and the same variance. More specifically, we set 
\begin{equation*}
    \mathbf{M} = b \cdot
    \begin{bmatrix}
    0.36 & 0.90 \\
    -0.58 & -0.06
    \end{bmatrix},
\end{equation*}
where $b \in \{0.20, 0.25, 0.30\}$. The entries of the matrix are simulated from a uniform distribution on $[-1, 1]$. As $b$ increases, the difference between the means in different blocks also increases. In addition, we set
\begin{equation*}
    \mathbf{S} = 
    \begin{bmatrix}
    1 & 1 \\
    1 & 1
    \end{bmatrix},
\end{equation*}
which means that all entries have the same standard deviation of 1. This type of structure is exactly what many biclustering methods including PL, SBC, and HSC assume the input matrix $\mathbf{X}$ has, therefore we would expect their performance to be good.

\begin{table}[ht]
\centering
\resizebox{\columnwidth}{!}{
\begin{tabular}{ c c c c c c c c }
    \hline
    & AKM ($\lambda = 0$) & AKM ($\lambda = 0.1$) & AKM ($\lambda = 1$) & KM & PL & SBC & HSC \\
    \hline
    & & & & $b = 0.20$ & & & \\
    \hline
    $a = 0.5$ & 0.722(0.002) & 0.591(0.015) & 0.527(0.007) & 0.542(0.006) & 0.328(0.010) & 0.409(0.014) & 0.387(0.010) \\
    $a = 1.0$ & 0.723(0.002) & 0.470(0.006) & 0.474(0.006) & 0.489(0.006) & 0.244(0.005) & 0.258(0.008) & 0.313(0.006) \\
    $a = 2.0$ & 0.718(0.002) & 0.445(0.006) & 0.447(0.006) & 0.444(0.006) & 0.221(0.004) & 0.222(0.004) & 0.299(0.004) \\
    \hline
    & & & & $b = 0.25$ & & & \\
    \hline
    $a = 0.5$ & 0.715(0.003) & 0.457(0.010) & 0.450(0.007) & 0.467(0.007) & 0.201(0.007) & 0.223(0.012) & 0.251(0.007) \\
    $a = 1.0$ & 0.716(0.003) & 0.415(0.010) & 0.415(0.010) & 0.404(0.007) & 0.150(0.006) & 0.151(0.006) & 0.222(0.006) \\
    $a = 2.0$ & 0.706(0.002) & 0.347(0.013) & 0.338(0.013) & 0.321(0.006) & 0.147(0.004) & 0.148(0.004) & 0.223(0.005) \\
    \hline
    & & & & $b = 0.30$ & & & \\
    \hline
    $a = 0.5$ & 0.711(0.003) & 0.392(0.012) & 0.387(0.011) & 0.386(0.010) & 0.110(0.004) & 0.110(0.004) & 0.166(0.006) \\
    $a = 1.0$ & 0.694(0.003) & 0.291(0.015) & 0.270(0.013) & 0.287(0.010) & 0.083(0.004) & 0.079(0.004) & 0.140(0.006) \\
    $a = 2.0$ & 0.670(0.004) & 0.173(0.007) & 0.169(0.005) & 0.197(0.007) & 0.080(0.003) & 0.081(0.003) & 0.143(0.006) \\
    \hline
\end{tabular}
}
\caption{The means (and standard errors) of the misclassification rate for Simulation 1 over 50 simulations.}
\label{tab:sim1}
\end{table}

Results are reported in Table \ref{tab:sim1}. Under this setting, we see that PL and SBC have similar misclassification rates, followed by HSC, and all three are much smaller than the other four methods. KM, AKM with $\lambda = 1$ and $\lambda = 0.1$ also have similar misclassification rates, though they are significantly larger than PL, SBC, and HSC. AKM with $\lambda = 0$ has the worst performance, with misclassification rates around 0.7 in all cases. In addition, we observe a general trend that as $a$ and $b$ increase, the misclassification rates decrease. This trend agrees with our expectation, because larger $a$ means larger input matrix, and larger $b$ means larger difference between the means in different blocks, both of which should improve the performance of biclustering methods.

\subsection{Simulation 2: Blocks with Different Variances and the Same Mean}
In the second simulation, we consider the case where the $2 \times 2$ blocks have different variances and the same mean. More specifically, we set 
\begin{equation*}
    \mathbf{S} = 
    \begin{bmatrix}
    1+b & 1 \\
    1 & 1+b
    \end{bmatrix},
\end{equation*}
where $b \in \{0.20, 0.25, 0.30\}$. As $b$ increases, the difference between the standard deviations in different blocks also increases. In addition, we set
\begin{equation*}
    \mathbf{M} = 
    \begin{bmatrix}
    0 & 0 \\
    0 & 0
    \end{bmatrix},
\end{equation*}
which means that all entries have the same mean of 0. This is the case where the blocks are defined not by different means but by different variances, and many biclustering methods including PL, SBC, and HSC are unable to detect this type of structure.

\begin{table}[ht]
\centering
\resizebox{\columnwidth}{!}{
\begin{tabular}{ c c c c c c c c }
    \hline
    & AKM ($\lambda = 0$) & AKM ($\lambda = 0.1$) & AKM ($\lambda = 1$) & KM & PL & SBC & HSC \\
    \hline
    & & & & $b = 0.20$ & & & \\
    \hline
    $a = 0.5$ & 0.475(0.015) & 0.650(0.013) & 0.700(0.003) & 0.722(0.002) & 0.726(0.002) & 0.717(0.006) & 0.727(0.002) \\
    $a = 1.0$ & 0.082(0.013) & 0.357(0.029) & 0.706(0.002) & 0.730(0.002) & 0.727(0.001) & 0.723(0.006) & 0.731(0.001) \\
    $a = 2.0$ & 0.012(0.001) & 0.150(0.014) & 0.719(0.002) & 0.731(0.002) & 0.730(0.001) & 0.721(0.008) & 0.733(0.001) \\
    \hline
    & & & & $b = 0.25$ & & & \\
    \hline
    $a = 0.5$ & 0.139(0.019) & 0.372(0.031) & 0.698(0.003) & 0.724(0.002) & 0.726(0.002) & 0.724(0.002) & 0.724(0.002) \\
    $a = 1.0$ & 0.008(0.001) & 0.088(0.008) & 0.715(0.002) & 0.730(0.002) & 0.732(0.001) & 0.724(0.006) & 0.731(0.002) \\
    $a = 2.0$ & 0.002(0.000) & 0.050(0.003) & 0.720(0.002) & 0.733(0.001) & 0.729(0.002) & 0.728(0.006) & 0.732(0.001) \\
    \hline
    & & & & $b = 0.30$ & & & \\
    \hline
    $a = 0.5$ & 0.019(0.002) & 0.099(0.015) & 0.701(0.003) & 0.724(0.002) & 0.727(0.002) & 0.714(0.008) & 0.725(0.002) \\
    $a = 1.0$ & 0.002(0.000) & 0.036(0.002) & 0.717(0.002) & 0.731(0.002) & 0.728(0.001) & 0.726(0.005) & 0.729(0.002) \\
    $a = 2.0$ & 0.000(0.000) & 0.022(0.002) & 0.722(0.002) & 0.735(0.001) & 0.729(0.002) & 0.723(0.008) & 0.731(0.002) \\
    \hline
\end{tabular}
}
\caption{The means (and standard errors) of the misclassification rate for Simulation 2 over 50 simulations.}
\label{tab:sim2}
\end{table}

Results are reported in Table \ref{tab:sim2}. Under this setting, we see that KM, PL, SBC, HSC, and AKM with $\lambda = 1$ all have similarly bad performance, with misclassification rates around 0.72 in all cases. AKM with $\lambda = 0$ achieves the smallest misclassification rates in all cases, and in some cases ($a \in \{1.0, 2.0\}, b \in \{0.25, 0.30\}$) even produces near perfect biclustering results. AKM with $\lambda = 0.1$ also has good performance, with slightly larger misclassification rates compared to AKM with $\lambda = 0$. In addition, with regard to AKM with $\lambda = 0$ and $\lambda = 0.1$, we also see the general trend that as $a$ and $b$ increase, the misclassification rates decrease. Interestingly, in this setting larger $b$ means larger difference between the variances in different blocks, and only AKM with $\lambda = 0$ and $\lambda = 0.1$ are able to detect and respond to this type of structure.

\subsection{Simulation 3: Blocks with Different Means and Different Variances}
In the third simulation, we consider the case where the $2 \times 2$ blocks have different means and different variances, which is a combination of the first and second case. More specifically, we set 
\begin{equation*}
    \mathbf{M} = b \cdot
    \begin{bmatrix}
    0.36 & 0.90 \\
    -0.58 & -0.06
    \end{bmatrix},\
    \mathbf{S} = 
    \begin{bmatrix}
    1+b & 1 \\
    1 & 1+b
    \end{bmatrix},
\end{equation*}
where $b \in \{0.20, 0.25, 0.30\}$. As $b$ increases, the difference between the means and standard deviations in different blocks also increases. This type of structure is arguably the most common type in practice, where different biclusters not only have different means but also have different variances.

\begin{table}[ht]
\centering
\resizebox{\columnwidth}{!}{
\begin{tabular}{ c c c c c c c c }
    \hline
    & AKM ($\lambda = 0$) & AKM ($\lambda = 0.1$) & AKM ($\lambda = 1$) & KM & PL & SBC & HSC \\
    \hline
    & & & & $b = 0.20$ & & & \\
    \hline
    $a = 0.5$ & 0.319(0.018) & 0.494(0.024) & 0.632(0.008) & 0.612(0.005) & 0.422(0.010) & 0.516(0.011) & 0.466(0.011) \\
    $a = 1.0$ & 0.031(0.004) & 0.183(0.015) & 0.535(0.006) & 0.542(0.004) & 0.320(0.005) & 0.334(0.008) & 0.333(0.007) \\
    $a = 2.0$ & 0.009(0.001) & 0.143(0.019) & 0.483(0.003) & 0.479(0.004) & 0.279(0.004) & 0.278(0.003) & 0.300(0.004) \\
    \hline
    & & & & $b = 0.25$ & & & \\
    \hline
    $a = 0.5$ & 0.042(0.006) & 0.132(0.012) & 0.540(0.005) & 0.543(0.006) & 0.289(0.009) & 0.345(0.014) & 0.311(0.009) \\
    $a = 1.0$ & 0.004(0.000) & 0.075(0.016) & 0.496(0.004) & 0.477(0.005) & 0.235(0.006) & 0.243(0.007) & 0.229(0.005) \\
    $a = 2.0$ & 0.001(0.000) & 0.092(0.021) & 0.474(0.003) & 0.416(0.006) & 0.215(0.003) & 0.217(0.003) & 0.221(0.005) \\
    \hline
    & & & & $b = 0.30$ & & & \\
    \hline
    $a = 0.5$ & 0.011(0.001) & 0.051(0.010) & 0.504(0.005) & 0.493(0.006) & 0.217(0.008) & 0.237(0.011) & 0.212(0.006) \\
    $a = 1.0$ & 0.001(0.000) & 0.070(0.020) & 0.481(0.003) & 0.425(0.006) & 0.167(0.006) & 0.175(0.007) & 0.153(0.005) \\
    $a = 2.0$ & 0.000(0.000) & 0.062(0.019) & 0.439(0.008) & 0.352(0.005) & 0.161(0.003) & 0.162(0.004) & 0.154(0.005) \\
    \hline
\end{tabular}
}
\caption{The means (and standard errors) of the misclassification rate for Simulation 3 over 50 simulations.}
\label{tab:sim3}
\end{table}

Results are reported in Table \ref{tab:sim3}. Under this setting, we see that KM and AKM with $\lambda = 1$ have similar and the worst performance. PL, SBC, and HSC also have similar but slightly better performance compared to KM and AKM with $\lambda = 1$. Most importantly, we see again that AKM with $\lambda = 0$ significantly outperforms all other methods in all cases, and has misclassification rates less than $0.05$ in all cases except when $a = 0.50$ and $b = 0.20$. In addition, AKM with $\lambda = 0.1$ also performs much better than KM, PL, SBC, and HSC in all cases except when $a = 0.50$ and $b = 0.20$, with slightly larger misclassification rates compared to AKM with $\lambda = 0$.

Comparing the results in Table \ref{tab:sim3} to those in Table \ref{tab:sim1}, we see that the additional difference between the variances in different blocks significantly benefits AKM with $\lambda = 0$ and $\lambda = 0.1$, resulting in a drastic decrease in misclassification rates. In contrast, this additional difference between block variances harms the performance of AKM with $\lambda = 1$, KM, PL, SBC, and HSC, causing varying degrees of increase in misclassification rates. This indicates that AKM with $\lambda = 0$ and $\lambda = 0.1$ are capable of leveraging the information about difference between block variances to detect meaningful structures, whereas AKM with $\lambda = 1$, KM, PL, SBC, and HSC are adversely affected by difference between block variances.

Comparing the results in Table \ref{tab:sim3} to those in Table \ref{tab:sim2}, we see that the additional difference between the means in different blocks significantly benefits AKM with $\lambda = 1$, KM, PL, SBC, and HSC, which is expected because many biclustering methods including PL, SBC, and HSC make the explicit assumption that different biclusters should have different means. However, this additional difference between block means also benefits AKM with $\lambda = 0$ and $\lambda = 0.1$, resulting in even better performance. 

Importantly, in Simulation 3, larger $b$ means larger difference between both the means and the variances in different blocks, so there are two kinds of signals present. In this situation, AKM with $\lambda = 0$ and $\lambda = 0.1$ perform much better than KM, PL, SBC, and HSC, indicating the possibility that AKM with appropriate $\lambda$ is suitable for dealing with complex data sets in the real world.

A visual illustration of different biclustering methods is provided in Figure \ref{fig:simulation_heatmaps}, which clearly demonstrates that AKM with $\lambda = 0$ and $\lambda = 0.1$ outperform other competing biclustering methods.

\begin{figure}[ht]
\centering

\begin{subfigure}{.24\textwidth}
\centering
\includegraphics[width=1\linewidth]{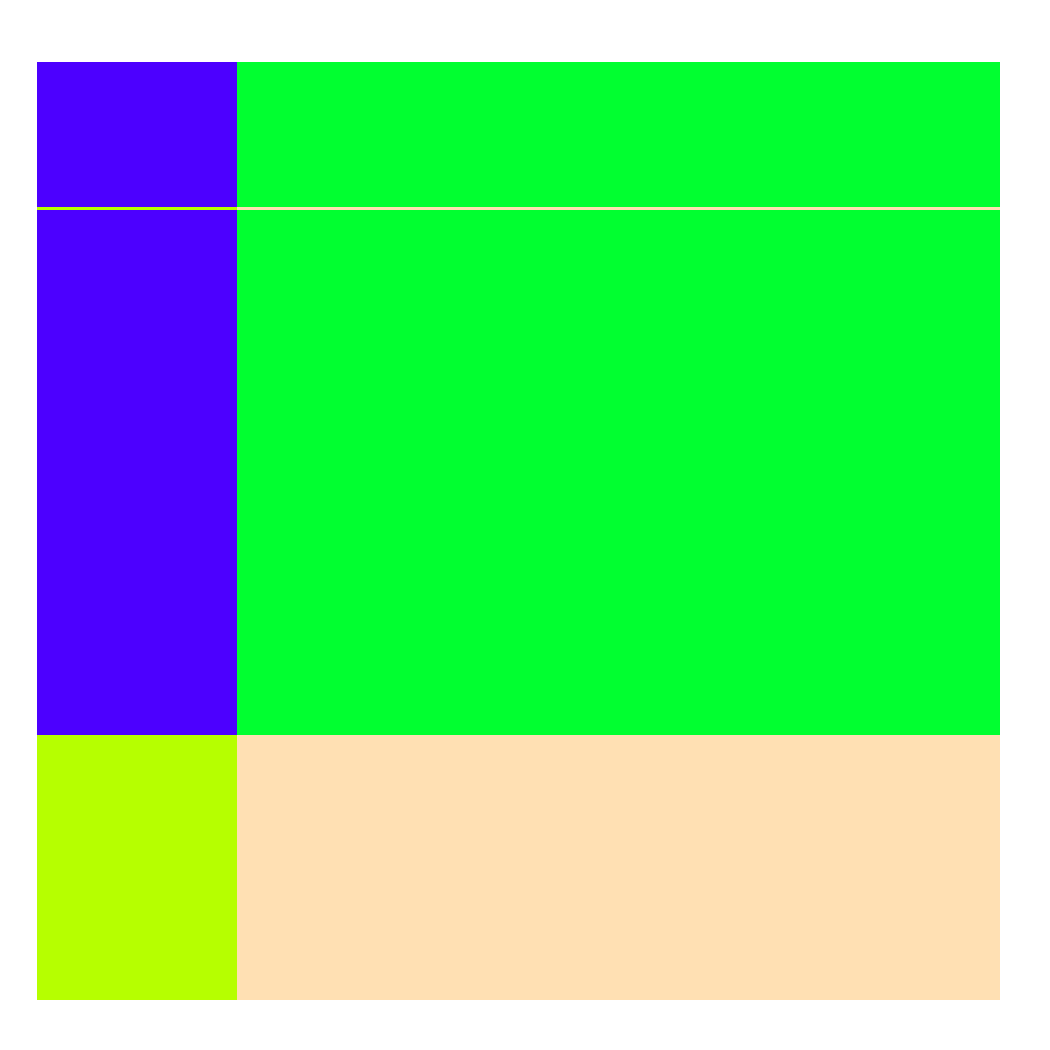}
\caption{}
\label{akm1_heatmap}
\end{subfigure}
\begin{subfigure}{.24\textwidth}
\centering
\includegraphics[width=1\linewidth]{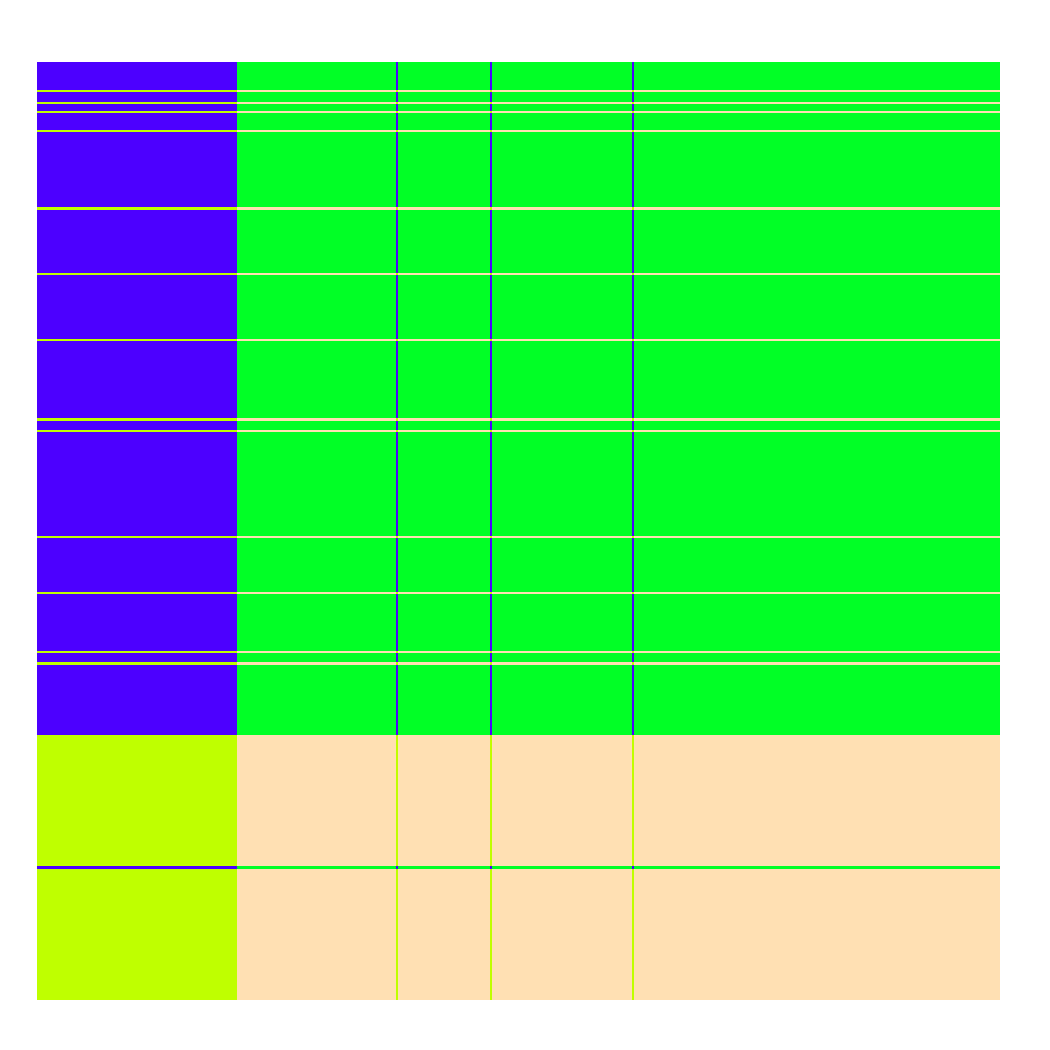}
\caption{}
\label{akm2_heatmap}
\end{subfigure}
\begin{subfigure}{.24\textwidth}
\centering
\includegraphics[width=1\linewidth]{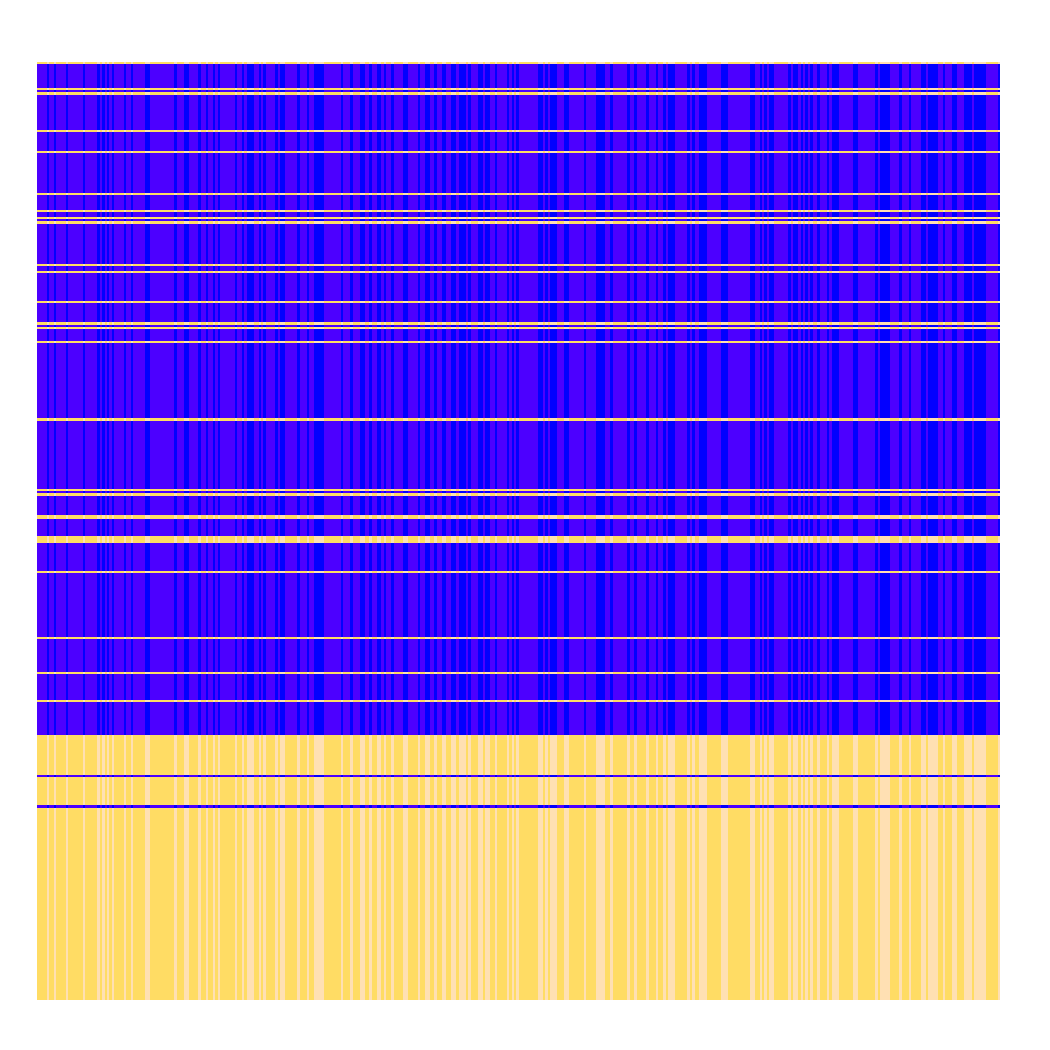}
\caption{}
\label{akm3_heatmap}
\end{subfigure}
\begin{subfigure}{.24\textwidth}
\centering
\includegraphics[width=1\linewidth]{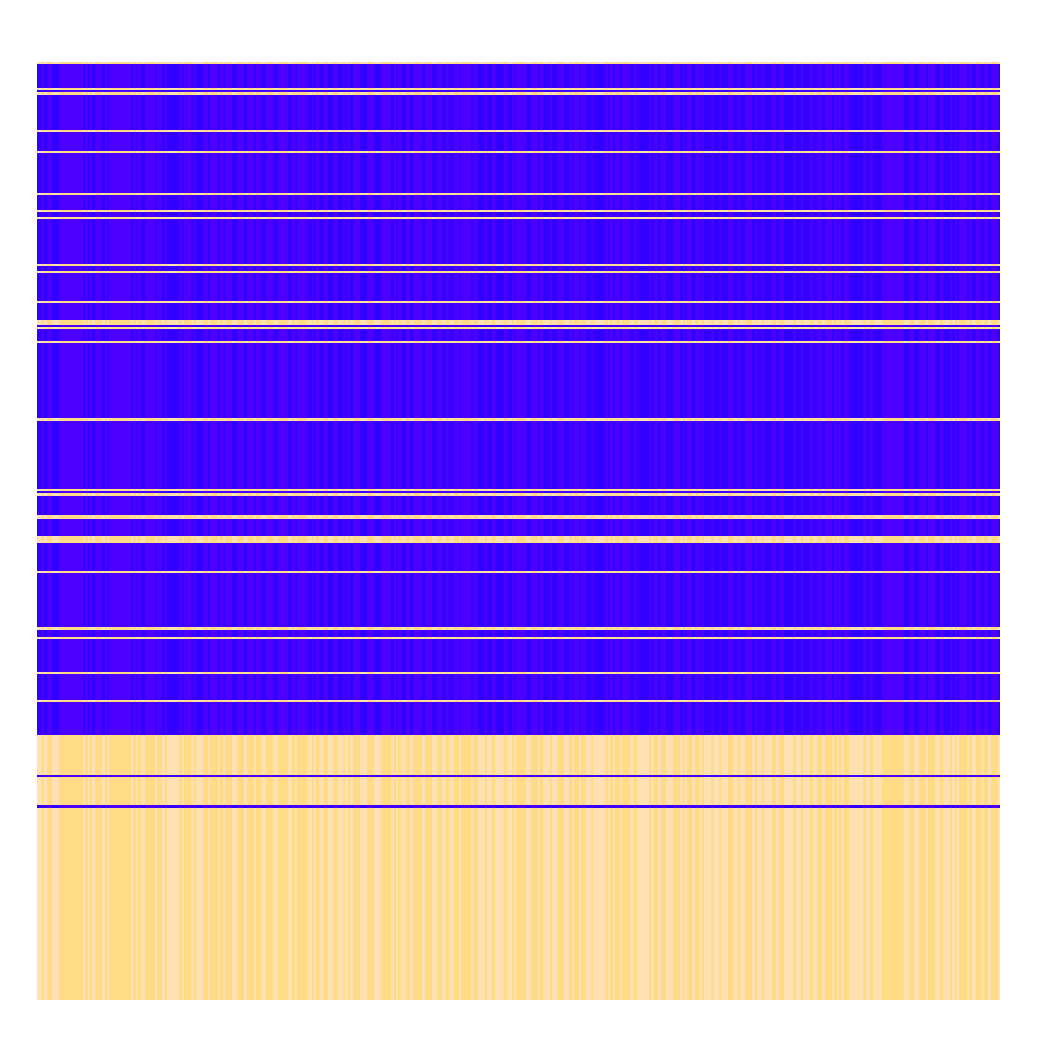}
\caption{}
\label{km_heatmap}
\end{subfigure}

\medskip

\begin{subfigure}{.24\textwidth}
\centering
\includegraphics[width=1\linewidth]{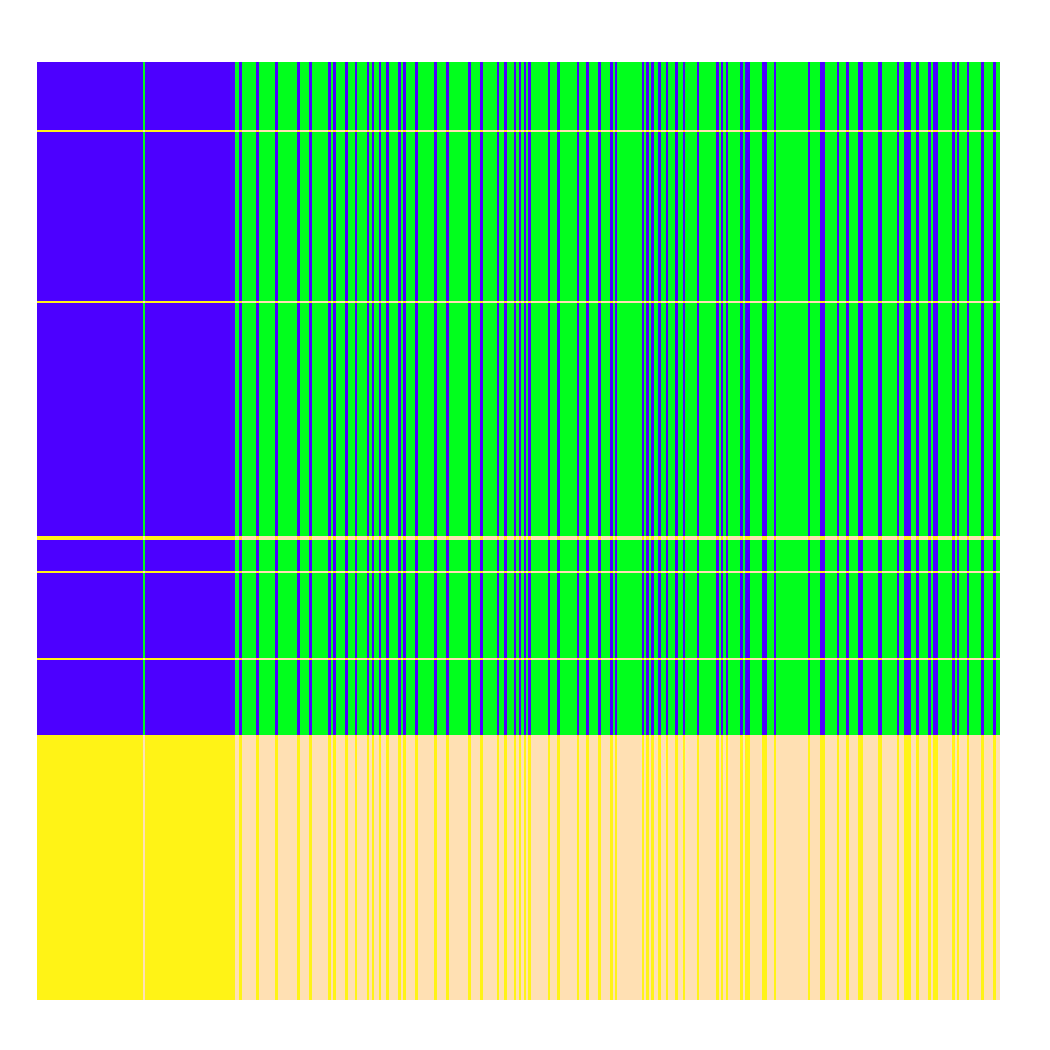}
\caption{}
\label{pl_heatmap}
\end{subfigure}
\begin{subfigure}{.24\textwidth}
\centering
\includegraphics[width=1\linewidth]{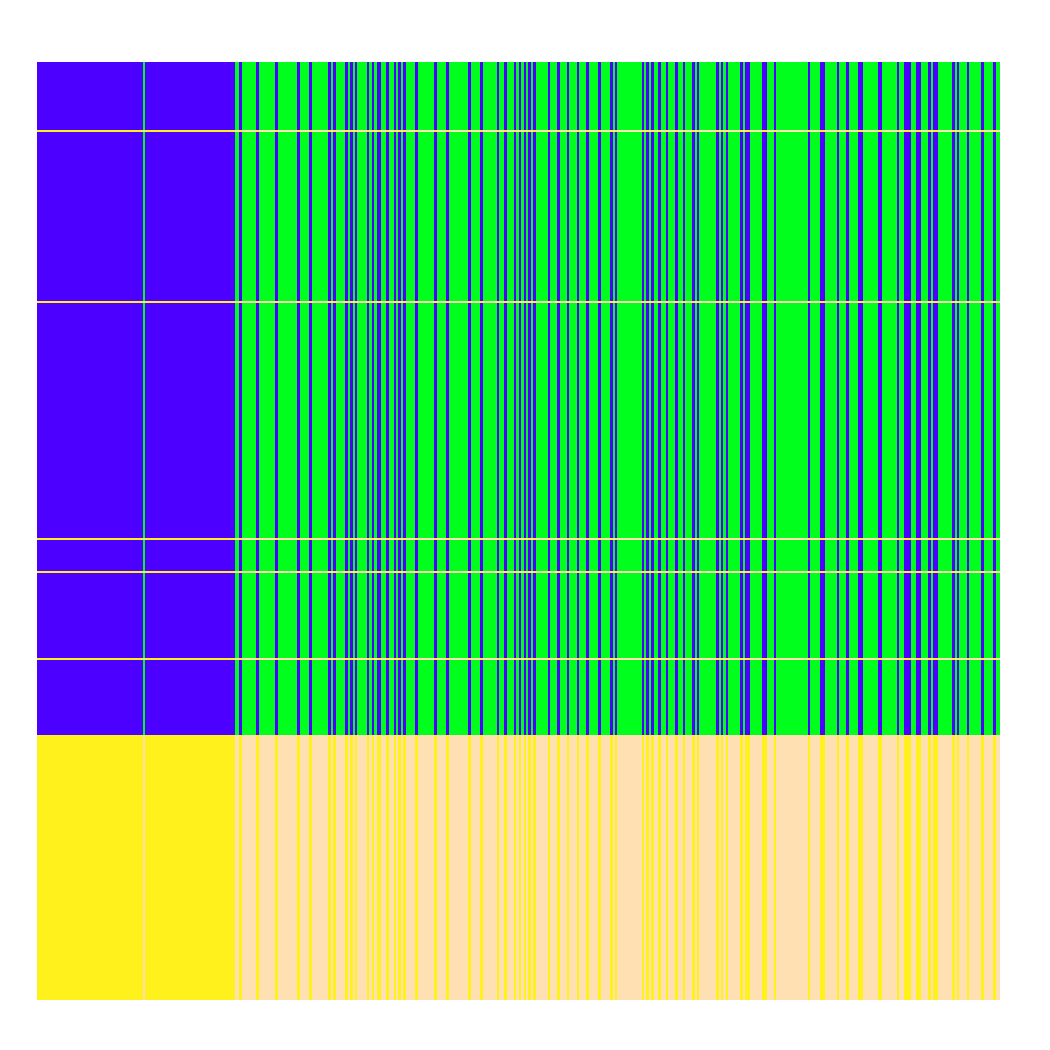}
\caption{}
\label{sbc_heatmap}
\end{subfigure}
\begin{subfigure}{.24\textwidth}
\centering
\includegraphics[width=1\linewidth]{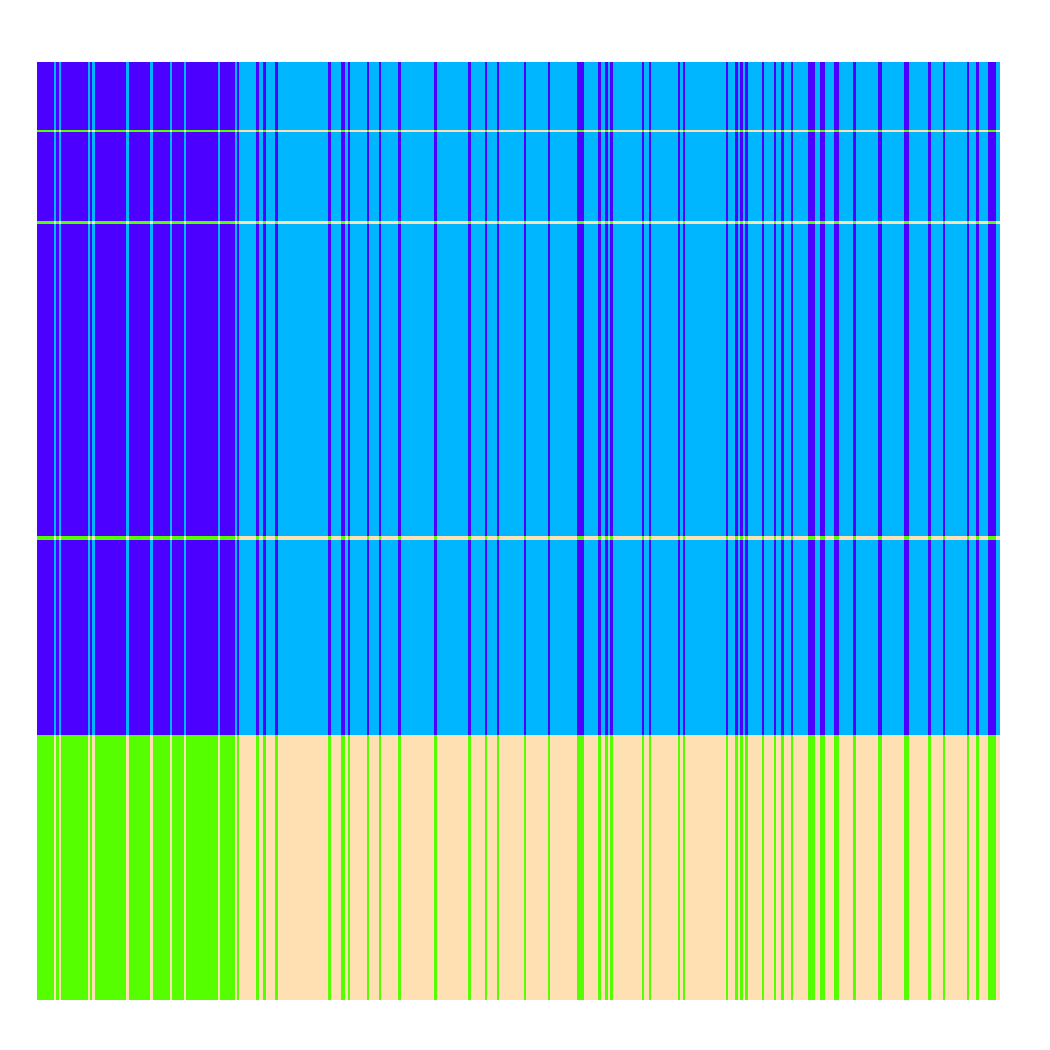}
\caption{}
\label{hsc_heatmap}
\end{subfigure}

\caption{Heatmaps of the estimated mean matrices from different biclustering methods for Simulation 3 with $a=1.0$ and $b=0.25$. The rows and columns are reordered so that the true row and column classes form $2 \times 2$ contiguous blocks. (a) AKM with $\lambda=0$, (b) AKM with $\lambda=0.1$, (c) AKM with $\lambda=1$, (d) KM, (e) PL, (f) SBC, and (g) HSC.}
\label{fig:simulation_heatmaps}
\end{figure}

\section{Applications}\label{sec:applications}
In this section, we apply our algorithm to three cancer gene expression data sets, all of which were proposed and preprocessed by \citet{de2008clustering}. In all three data sets, the rows represent different samples of tissues, and the columns represent different genes. The samples have already been classified into different groups based on their types of tissue, which means that the true sample cluster labels are available. This enables us to evaluate and compare the performance of our algorithm with four other biclustering algorithms (KM, PL, SBC, and HSC) in terms of sample misclassification rate, which is defined as 
\begin{equation*}
    \text{sample misclassification rate} = \frac{\text{number of samples classified into the wrong cluster}}{\text{total number of samples}}.
\end{equation*}
Smaller sample misclassification rate indicates better performance at clustering samples.

\subsection{Breast and Colon Cancer Gene Expression Data Set}
The first data set consists of 104 samples and 182 genes. There are only two types of samples: 62 samples correspond to breast cancer tissues, and 42 samples correspond to colon cancer tissues.

When applying our biclustering algorithm to real-world data, sometimes we do not have prior knowledge about the appropriate number of biclusters $k$. In that case, one good way to select $k$ is the ``elbow method'', which is also a widely used heuristic method to determine the number of clusters $k$ in traditional $k$ means clustering. The idea is to run the algorithm with $\lambda = 0$ and calculate the loss for different values of $k$, make a plot with loss on the $y$-axis and $k$ on the $x$-axis, and select the $k$ at the point of inflection (the ``elbow'' of the curve). In Figure \ref{fig:bc}, we plot the losses for $k$ from 1 to 10 when applying our algorithm with $\lambda = 0$ to the breast and colon cancer data set. By looking at Figure \ref{fig:bc}, it is quite clear that our algorithm should select $k=2$ as the number of biclusters, which is also the true number of row clusters.

\begin{figure}[ht]
\centering
\includegraphics[width=0.6\textwidth, height=0.4\textwidth]{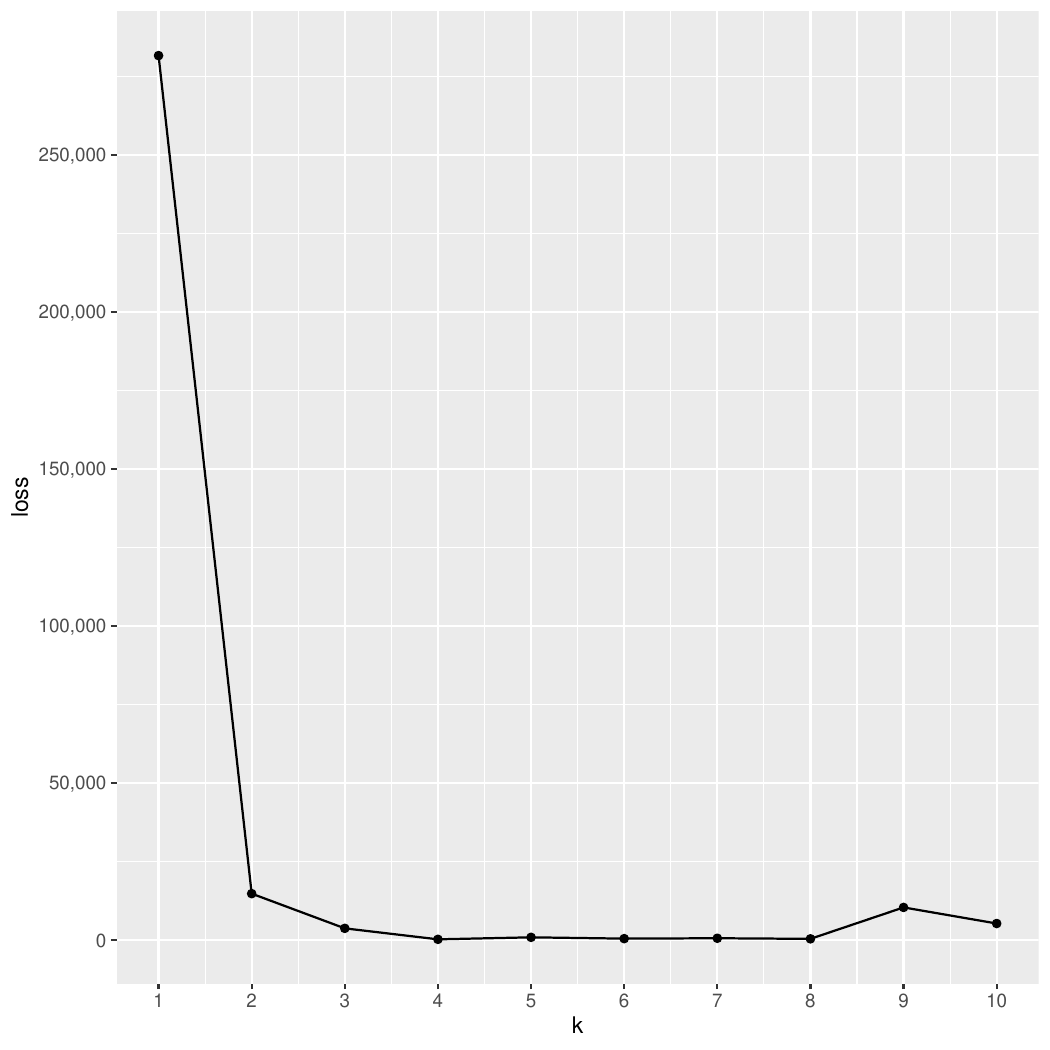}
\caption{Different losses for $k$ from 1 to 10 on the breast and colon cancer data set.}
\label{fig:bc}
\end{figure}

Having selected $k=2$ as the number of biclusters, we apply our algorithm with three different $\lambda$ values: 0, 0.1 and 1. For comparison, we also apply $k$-means clustering on the rows (KM), profile likelihood biclustering (PL), sparse biclustering (SBC), and high-order spectral clustering (HSC), with the number of row clusters set to 2. Since PL, SBC, and HSC all allow the number of column clusters to be different from the number of row clusters, we vary the number of column clusters from 1 to 20 and report the best result. For PL, the distribution family is selected as Gaussian. For SBC, the input matrix is always mean-centered before applying the method, and the tuning parameter $\lambda$ is selected by choosing the $\lambda$ with the smallest BIC over a grid of $\lambda$ values, both of which are suggested in their paper. All methods except HSC are run with 100 random initializations.

\begin{table}[ht]
\centering
\begin{tabular}{c c c c c c c}
    \hline
    AKM ($\lambda = 0$) & AKM ($\lambda = 0.1$) & AKM ($\lambda = 1$) & KM & PL & SBC & HSC \\
    \hline 
    0.0385 & 0.0385 & 0.0385 & 0.3462 & 0.3462 & 0.3462 & 0.3462 \\
    \hline
\end{tabular}
\caption{The sample misclassification rates on the breast and colon cancer data set.}
\label{tab:bc}
\end{table}

The sample misclassification rates are reported in Table \ref{tab:bc}. Noticeably, all four other biclustering methods have the same sample misclassification rate of 0.3462, which is around nine times larger than the sample misclassification rate of 0.0385, achieved by our algorithm with all three different values of $\lambda$. Compared to KM which ignores the interaction between samples and genes, our algorithm successfully leverages information about the interaction to significantly improve sample clustering performance. In contrast, PL, SBC, and HSC all fail to perform better than KM at clustering samples on this data set.

\subsection{Brain Cancer Gene Expression Data Set}
The second data set consists of 50 samples and 1739 genes. There are three types of samples: 31, 14, and 5 samples correspond to three different types of brain cancer tissues.

Again, we try to use the elbow method to select the appropriate number of biclusters $k$ for our algorithm. In Figure \ref{fig:brain}, we plot the losses for $k$ from 1 to 10 when applying our algorithm with $\lambda = 0$ to the brain cancer data set. In this case, it is not completely clear which $k$ we should select, although $k=2$ or $k=8$ might be the two most reasonable choices based on the plot alone. However, in some applications we do have prior knowledge about the appropriate number of biclusters $k$. For example, we know $k=3$ should be the default choice for this data set, because three row clusters could be naturally defined based on the three types of samples. Moreover, even if we select $k=2$ or $k=8$ based on the plot, the resulting biclusters could reveal interesting findings about the subgroups of samples or genes in this data set.

\begin{figure}[ht]
\centering
\includegraphics[width=0.6\textwidth, height=0.4\textwidth]{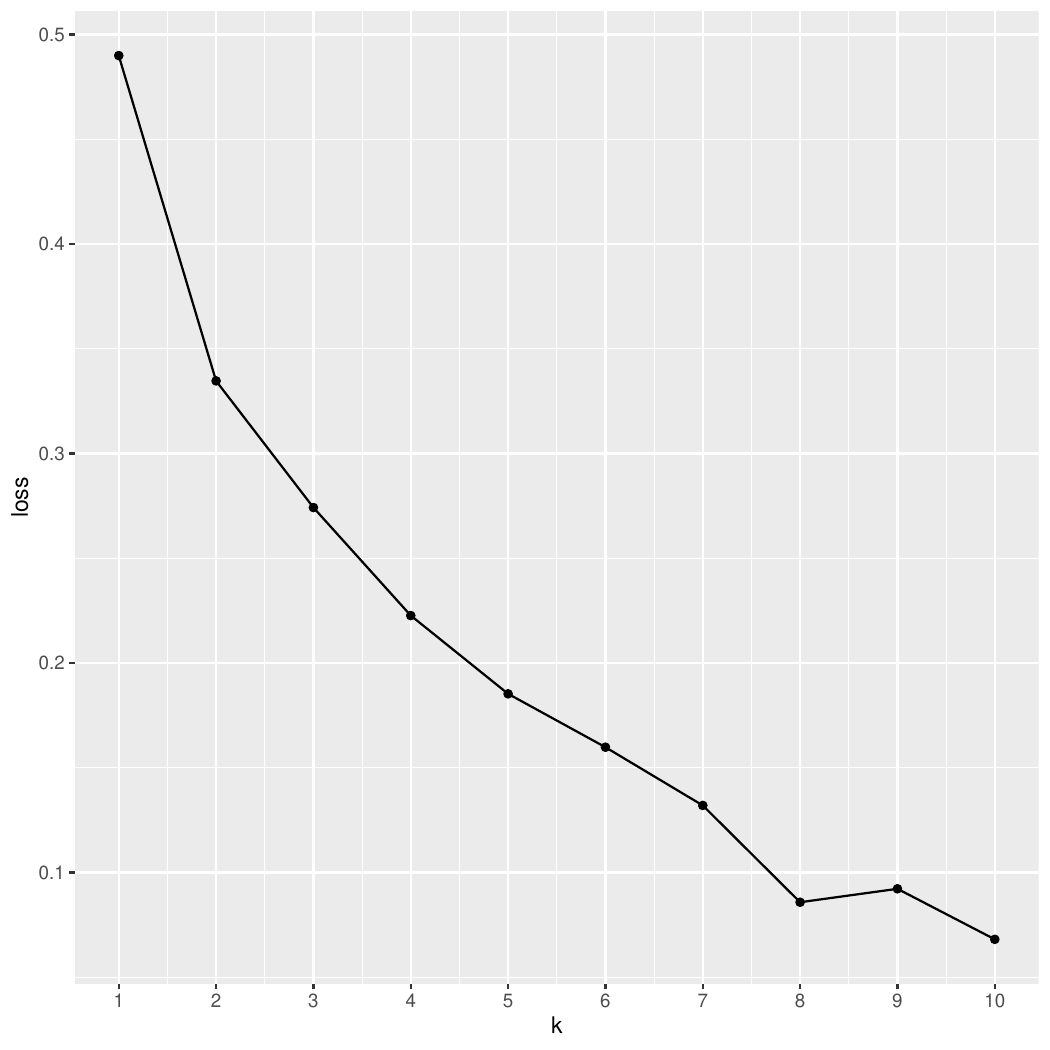}
\caption{Different losses for $k$ from 1 to 10 on the brain cancer data set.}
\label{fig:brain}
\end{figure}

Having selected $k=3$ as the number of biclusters, we again apply our algorithm with three different $\lambda$ values: 0, 0.1 and 1. We also apply KM, PL, SBC, and HSC, with the number of row clusters set to 3. All other settings of the biclustering algorithms are the same as in the first application.

\begin{table}[ht]
\centering
\begin{tabular}{c c c c c c c}
    \hline
    AKM ($\lambda = 0$) & AKM ($\lambda = 0.1$) & AKM ($\lambda = 1$) & KM & PL & SBC & HSC \\
    \hline 
    0.22 & 0.22 & 0.22 & 0.36 & 0.34 & 0.34 & 0.36 \\
    \hline
\end{tabular}
\caption{The sample misclassification rates on the brain cancer data set.}
\label{tab:brain}
\end{table}

The sample misclassification rates are reported in Table \ref{tab:brain}. In this case, we again see that our algorithm with all three different values of $\lambda$ achieve the smallest sample misclassification rate of 0.22, whereas other four biclustering methods all have sample misclassification rates around 0.34. This result indicates that even on larger gene expression data sets with more than two biclusters, our algorithm is still able to significantly outperform other biclustering methods such as KM, PL, SBC, and HSC.

\subsection{Prostate Cancer Gene Expression Data Set}
The third data set consists of 92 samples and 1288 genes. There are four types of samples: 27 samples correspond to benign prostate tissues, and 13, 32, 20 samples correspond to prostate cancer tissues of three different stages, respectively. 

Again, we try to use the elbow method to select the appropriate number of biclusters $k$ for our algorithm. In Figure \ref{fig:prostate}, we plot the losses for $k$ from 1 to 10 when applying our algorithm with $\lambda = 0$ to the prostate cancer data set. In this case, it is clear that our algorithm should select $k=4$ as the number of biclusters, which is also the true number of row clusters.

\begin{figure}[ht]
\centering
\includegraphics[width=0.6\textwidth, height=0.4\textwidth]{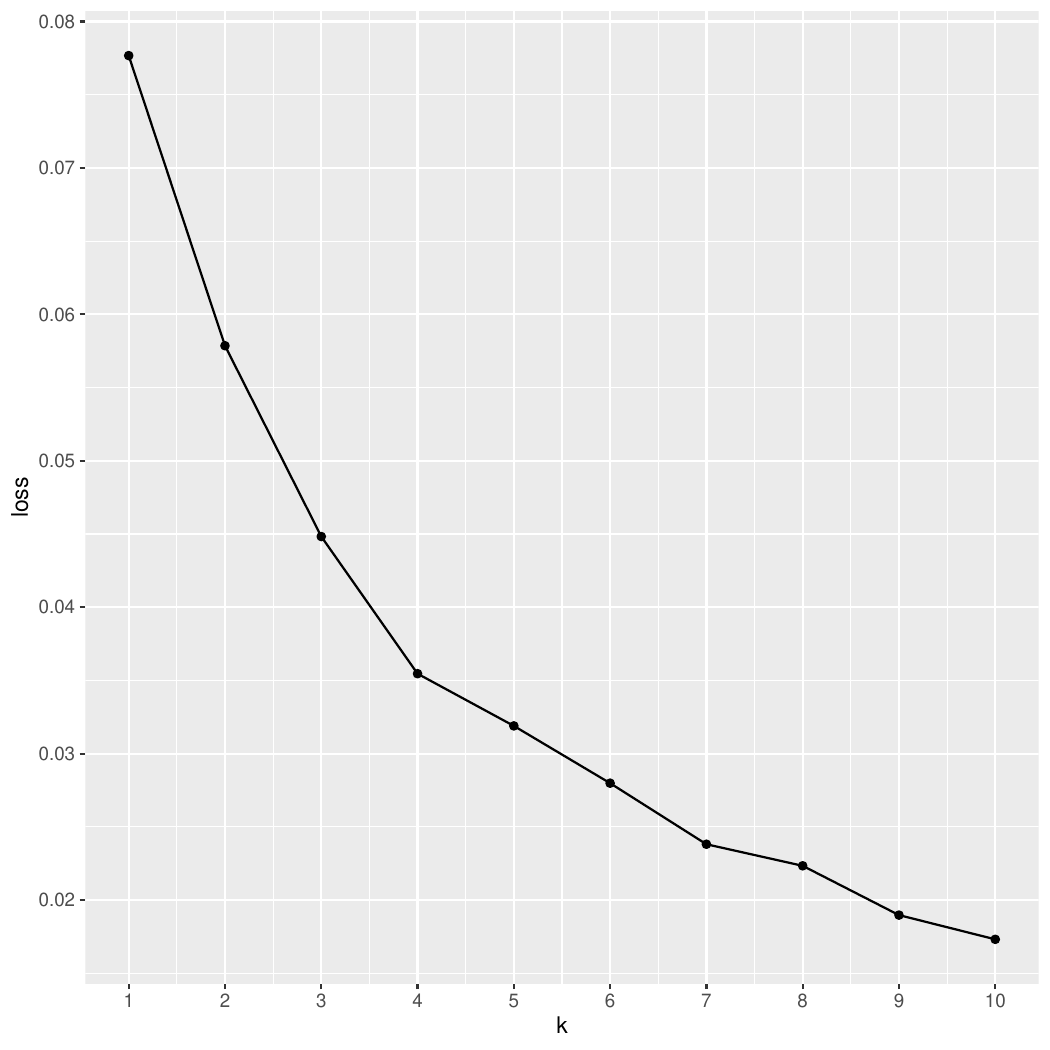}
\caption{Different losses for $k$ from 1 to 10 on the prostate cancer data set.}
\label{fig:prostate}
\end{figure}

Having selected $k=4$ as the number of biclusters, we again apply our algorithm with three different $\lambda$ values: 0, 0.1 and 1. We also apply KM, PL, SBC, and HSC, with the number of row clusters set to 4. All other settings of the biclustering algorithms are the same as in the first and second application.

\begin{table}[ht]
\centering
\begin{tabular}{c c c c c c c}
    \hline
    AKM ($\lambda = 0$) & AKM ($\lambda = 0.1$) & AKM ($\lambda = 1$) & KM & PL & SBC & HSC \\
    \hline 
    0.5217 & 0.4239 & 0.4239 & 0.5652 & 0.5109 & 0.5543 & 0.5652 \\
    \hline
\end{tabular}
\caption{The sample misclassification rates on the prostate cancer data set.}
\label{tab:prostate}
\end{table}

The sample misclassification rates are reported in Table \ref{tab:prostate}. In this case, we see that AKM with $\lambda = 0.1$ and $\lambda = 1$ achieve the smallest sample misclassification rate of 0.4239. AKM with $\lambda = 0$ has a sample misclassification rate of 0.5217, which is slightly worse than PL but better than SBC, KM, and HSC. This result once again demonstrates our algorithm's ability to achieve better performance at clustering samples on gene expression data sets compared to other biclustering algorithms such as KM, PL, SBC, and HSC.

\section{Discussion}\label{sec:discussion}
In this paper, we have provided a new formulation of the biclustering problem based on the idea of minimizing the empirical clustering risk. We have developed and proved a consistency result with respect to the empirical clustering risk. Since the optimization problem is combinatorial in nature, finding the global minimum is computationally infeasible. In light of this fact, we have proposed a simple and novel algorithm that finds a local minimum by alternating the use of $k$-means clustering between columns and rows, and released an R package \texttt{akmbiclust} on CRAN that implements the algorithm. We have also provided a probabilistic interpretation of the optimization problem, and proposed extending our method by adding penalization terms. We have evaluated and compared the performance of our algorithm to other related biclustering methods on both simulated data and real-world gene expression data sets. The results have demonstrated that our algorithm is able to detect meaningful structures in the data and outperform other competing biclustering methods in a lot of situations.

One big advantage of our algorithm is its simplicity: the $k$ biclusters can be found simply by applying an adapted version of the $k$-means clustering algorithm between columns and rows alternately. However, the simplicity comes at the expanse of flexibility: by assigning every row and every column to one and only one bicluster, our method excludes the possibility of overlapping biclusters. Although allowing the biclusters to overlap might be a more reasonable assumption in some cases, we argue that trading off some flexibility for more simplicity is a worthwhile choice for many applications.

In the future, we plan to explore a more general setting of biclustering: biclustering on graphs. The idea is that each column of $\mathbf{X}$ represents a vertex in a graph $G$, and each row of $\mathbf{X}$ represents a measurement on the vertices. The graph structure of $G$ imposes some restrictions on the column partitions, namely the columns in every bicluster should correspond to vertices in a connected subgraph of $G$. The problem formulation in Section \ref{sec:theory} can be considered as the special case where there is no restriction on the column partitions, and that is equivalent to the graph $G$ being a complete graph with $m$ vertices.

Under this setting of biclustering on graphs, a consistency result could be developed and proved in a very similar way. The only difference is that the optimization would be over all possible choices of column partitions that ``preserve'' the graph structure of $G$. The main theorem still holds true after adding the requirement of $\mathbf{I} \in \mathcal{I}(G)$, where $\mathcal{I}(G)$ denote the set of column partitions that ``preserve'' the graph structure of $G$. However, the algorithm presented in Section \ref{sec:algorithm} no longer applies to this setting, and in general people need to either do exhaustive searches over all column partitions in the space of $\mathcal{I}(G)$, or find some heuristic method that could efficiently search through the space of $\mathcal{I}(G)$. This would depend on the specific graph structure of $G$.

\bibliography{ref}

\end{document}